\documentclass{article}

\usepackage{microtype}
\usepackage{graphicx}
\usepackage{subcaption}
\usepackage{booktabs} % for professional tables
\usepackage{arydshln}
\usepackage{multirow}
\usepackage{rotating}
\usepackage{array}
\usepackage{hyperref}
\usepackage{enumitem}
\usepackage{rotating}

\usepackage[preprint]{icml2026}

\usepackage{amsmath}
\usepackage{amssymb}
\usepackage{mathtools}
\usepackage{amsthm}

\usepackage[capitalize,noabbrev]{cleveref}
    
\theoremstyle{plain}
\newtheorem{theorem}{Theorem}[section]

\newtheorem{lemma}[theorem]{Lemma}

\theoremstyle{definition}
\newtheorem{definition}[theorem]{Definition}

\theoremstyle{remark}

\newcommand{\rom}[1]{\uppercase\expandafter{\romannumeral #1\relax}}

\DeclareMathOperator{\Mat}{\mathrm{Mat}}

\newcommand{\tuckerattn}{Tucker Attention} % Do not add a space to macro, else the space persists before a full stop.
\newcommand{\seqlen}{N}
\newcommand{\nH}{{n_{\rm{H}}}}
\newcommand{\dH}{d_{\rm{H}}}
\newcommand{\dmod}{d_{\rm{model}}}

\newcommand{\WQ}{W^{\rm{Q}}}

\newcommand{\WDQ}{W^{\rm{DQ}}}
\newcommand{\WUQ}{W^{\rm{UQ}}}

\newcommand{\WK}{W^{\rm{K}}}
\newcommand{\WUK}{W^{\rm{UK}}}

\newcommand{\WDKV}{W^{\rm{DKV}}}

\newcommand{\WV}{W^{\rm{V}}}
\newcommand{\WDV}{W^{\rm{DV}}}
\newcommand{\WUV}{W^{\rm{UV}}}

\newcommand{\WO}{W^{\rm{O}}}

\newcommand{\softmax}{\sigma}
\newcommand{\latentDim}{d_{\rm{c}}}
\newcommand{\dcK}{\latentDim^{\rm{K}}}
\newcommand{\dcQ}{\latentDim^{\rm{Q}}}

\newcommand{\nG}{n_{\rm{KV}}}
\newcommand{\rH}{r_{\rm{1}}}
\newcommand{\rQ}{r_{\rm{2}}}
\newcommand{\rK}{r_{\rm{3}}}
\newcommand{\wtrH}{\widetilde r_{\rm{1}}}
\newcommand{\wtrQ}{\widetilde r_{\rm{2}}}
\newcommand{\wtrK}{\widetilde r_{\rm{3}}}

\usepackage[textsize=tiny]{todonotes}

\icmltitlerunning{Tucker Attention: A generalization of approximate attention mechanisms}

\begin{document}

\twocolumn[
\icmltitle{Tucker Attention: A generalization of approximate attention mechanisms}

% It is OKAY to include author information, even for blind
% submissions: the style file will automatically remove it for you
% unless you've provided the [accepted] option to the icml2025
% package.

% List of affiliations: The first argument should be a (short)
% identifier you will use later to specify author affiliations
% Academic affiliations should list Department, University, City, Region, Country
% Industry affiliations should list Company, City, Region, Country

% You can specify symbols, otherwise they are numbered in order.
% Ideally, you should not use this facility. Affiliations will be numbered
% in order of appearance and this is the preferred way.
\icmlsetsymbol{equal}{*}

\begin{icmlauthorlist}
\icmlauthor{Timon Klein}{equal,OVGU}            %timon.klein@ovgu.de
%\icmlauthor{Jonas Kusch}{equal,yyy,comp}
\icmlauthor{Jonas Kusch}{equal,NULS}
\icmlauthor{Sebastian Sager}{OVGU,MPI}         %sager@ovgu.de
\icmlauthor{Stefan Schnake}{equal,ornl}
\icmlauthor{Steffen Schotth\"ofer}{equal,ornl}
%\icmlauthor{}{sch}
%\icmlauthor{}{sch}
\end{icmlauthorlist}

\icmlaffiliation{OVGU}{Department of Mathematics; Otto von Guericke University Magdeburg; 39106 Magdeburg; Germany.}
\icmlaffiliation{MPI}{Max Planck Institute for Dynamics of Complex Technical Systems; 39106 Magdeburg; Germany.}
\icmlaffiliation{NULS}{Scientific Computing, Norwegian University of Life Sciences, \r{A}s, Norway} % Jonas
\icmlaffiliation{ornl}{Computer Science and Mathematics Division, Oak Ridge National Laboratory, Oak Ridge, TN 37980, USA}
%\icmlaffiliation{sch}{School of ZZZ, Institute of WWW, Location, Country}

\icmlcorrespondingauthor{Steffen Schotth\"ofer}{schotthofers@ornl.gov}

\icmlkeywords{Machine Learning, ICML}

\vskip 0.3in
]

\printAffiliationsAndNotice{\icmlEqualContribution} % otherwise use the standard text.

\begin{abstract}
The pursuit of reducing the memory footprint of the self-attention mechanism in multi-headed self attention (MHA) spawned a rich portfolio of methods, e.g., group-query attention (GQA) and multi-head latent attention (MLA). The methods leverage specialized low-rank factorizations across embedding dimensions or attention heads. From the point of view of classical low-rank approximation, these methods are unconventional and raise questions of which objects they really approximate and how to interpret the low-rank behavior of the resulting representations. 
To answer these questions, this work proposes a generalized view on the weight objects in the self-attention layer and a factorization strategy, which allows us to construct a parameter efficient scheme, called Tucker Attention.
Tucker Attention requires an order of magnitude fewer parameters for comparable validation metrics, compared to GQA and MLA, as evaluated in LLM and ViT test cases. Additionally, Tucker Attention~encompasses GQA, MLA, MHA as special cases and is fully compatible with flash-attention and rotary position embeddings (RoPE).
This generalization strategy yields insights of the actual ranks achieved by MHA, GQA, and MLA, and further enables simplifications for MLA.
\end{abstract}

\section{ Introduction }
\label{sec:intro}

The development of the attention mechanism \cite{bahdanau2014neural} and its use in transformer architectures \cite{vaswani2017attention} has significantly impacted deep learning. 
Transformer architectures, relying on multi-head attention (MHA) and related methods, however, incur substantial computational costs.
MHA introduces a large number of parameters in the key, query, value,  and output weight matrices, which induces substantial memory costs during pre-training when storing optimizer states and gradients.

These memory costs manifest at multiple stages: during training through optimizer states and gradients, during inference through the KV-cache (whose size scales with the number of heads and head dimension), and during attention computation through GPU cache pressure in kernels like flash-attention \cite{dao2022flashattentionfastmemoryefficientexact}. Designing compressed, parameter-efficient representations of attention weights that address all three bottlenecks simultaneously remains an open challenge.

A main approach to reduce KV cache overhead and the overall parameter count in transformer architectures is to leverage parameter redundancies and the limited utility of individual attention heads.
A large amount of work has studied such redundancies, which are summarized below.
\cite{voita2019analyzing} shows that many heads in multi-head attention contribute little and can be pruned, implying the existence of low-rank structure across heads.
\cite{Brunner2020On} demonstrates that portions of the attention weights may not affect the model output.
\cite{bhojanapalli2020low} identifies head dimensions that induce a low-rank bottleneck preventing the representation of arbitrary context vectors.
\cite{Behnke2020} proposes head pruning during training.
\cite{li2018multi} mitigates redundant learning across heads through disagreement regularization.
\cite{Li2021} applies differentiable subset pruning to remove redundant head information.
\cite{Paul2019} observes that some layers contain many redundant heads, whereas others rely heavily on a large head dimension. 

Several methods have been proposed to leverage parameter redundancies.
Among the most widely used approaches is Multi-Query Attention (MQA), which employs a single head for the key and value projections \cite{shazeer2019fast}.
MQA lowers the parameter count and enables efficient KV caching, where the cache size is independent of the number of attention heads.
To trade off parameter efficiency and accuracy, Grouped-Query Attention (GQA) \cite{ainslie2023gqa} adopts a limited number of heads for the key and value projections, achieving improved accuracy over MQA.
A further direction is Multi-Head Latent Attention (MLA) \cite{liu2024deepseek}, which compresses the query, key, and value weights in low-rank format, resulting in reduced parameter counts.
Moreover, the KV-cache of MLA that scales only with the low-rank latent dimension, instead of the per-head embedding dimension and number of heads.

In recent years, several works have investigated the low-rank structure of attention weights through approaches similar to low-rank adapters (LoRA) for fine-tuning \citep{hu2022lora}. \cite{cordonnier2020multi} leverages low-rank structure in multi-head attention to enforce information sharing by learning a mixing vector that selects relevant information from shared projection maps.
Related to the MLA approach, in \cite{zhang2025tensor} query, key, and value weights are individually represented through a low-rank canonical polyadic (CP) decomposition while maintaining compatibility with position encodings like rotational position embedding (RoPE) \cite{su2023RoPE}. 
Similarly, \cite{gu2025tensorllm} represents query, key, value, and output weights as an order three tensor, and then stacks these as an order four tensor before factorizing the object in low-rank Tucker format.
\cite{meng2025transmla} successfully represented GQA layers as MLA layers, which motivates further generalization. 

We remark that there is a wealth of approximate attention methods that avoid the quadratic cost in sequence length induced by MHA and similar methods, e.g., linear attention \cite{wang2020linformer}.
These methods invoke a different efficiency to accuracy trade-off discussion that is out of scope for this paper. 
This includes sparse attention mechanisms \cite{beltagy2020longformer,zaheer2020bigbird} that focus on sub-selecting chunks of the input sequence to enforce sparsity of the pre-softmax attention matrix.
These methods are orthogonal to the proposed work and can be combined.

\subsection{Contribution}
In this paper, we provide a principled, interpretable, and generalized view on the attention weight representation.
Instead of inspecting and manipulating the query, key, value and output matrices \textit{individually}, we analyze the pre-softmax attention weights (query and key) and the post-softmax attention weights (value and output) as standalone objects. The former constitutes a bilinear form on the token sequence that yields the pre-softmax attention matrix per head, and the latter a linear form to combine the post-softmax attention matrices to the attention output, relative to the current input sequence. 
Thus, pre- and post-softmax attention weights are each structured in an order three tensor object each. 

As a corollary we observe that MHA itself is a special factorization of this object. We ask the question:
\begin{itemize}[itemsep=0pt,topsep=0pt]
    \item  Which tensor ranks does MHA assume in this context?
    \item  Instead of approximating MHA, as the listed related works do, is it more efficient to approximate the tensor structure directly?
\end{itemize}
We find that MHA is the most efficient factorization if we assume pre- and post- softmax tensors to be of full rank.
Under the reasonable assumption \cite{voita2019analyzing,Brunner2020On} that there is low-rank structure in the pre- and post-softmax tensors, we propose a \textit{direct} low-rank Tucker decomposition of those objects, simply called \tuckerattn. 

This proposed \tuckerattn~has two key features:
\begin{itemize}[itemsep=0pt,topsep=0pt]
    \item \textbf{Expressive}: MHA, MQA, GQA, and MLA are all special cases of \tuckerattn~corresponding to specific Tucker ranks. This unifying view reveals that none of these methods compress the head mode of the attention tensor, leaving exploitable redundancies (\Cref{sec:special_cases_section}).
    
     \item \textbf{Informative}:  Unlike prior methods that target individual weight matrices, \tuckerattn~enables independent rank selection along head, query, key, value, and output modes. This flexibility yields up to $9\times$ parameter reduction at comparable accuracy (\Cref{sect:numerical_results}).
\end{itemize}

In \Cref{sect:compatibility}, we show that the proposed \tuckerattn~is compatible with KV caching, a variation of RoPE \cite{su2023RoPE}, and Flash-Attention.
As a corollary, we show that MLA is compatible RoPE under the proposed variation, even without the auxiliary construct of decoupled RoPE, as proposed by \cite{deepseekai2024deepseekv2strongeconomicalefficient}.
Thus MLA with RoPE can be simplified significantly.
Furthermore, compatibility  with existing flash-attention implementations unifies the advantages of MQA and MLA to reduce memory pressure for flash-attention.
\tuckerattn~loads one key and value embedding vector for all heads of the query embeddings, just like MQA.
By controlling the rank of the key and value embedding, \tuckerattn~allows direct control of the KV-cache, and thus the ability to control the GPU cache pressure like MLA in inference mode. We remark that \tuckerattn~enjoys these gains during training and inference. 

Finally, in \Cref{sect:numerical_results}, we present results on a vision transformer transfer learning scenario and GPT2 pretraining.
The results show \tuckerattn~yields a factorization that is often by an order of magnitude more parameter efficient than GQA, MLA, or related methods\footnote{We define parameter-efficiency in this context as validation metric over parameter count.}.

\paragraph{Notation} We use the following notation: vectors and scalars lowercase in italic font $(x)$, matrices in uppercase italic font $(W)$, and tensors in uppercase calligraphic font $(\mathcal{C})$.
Matrix unfolding, a.k.a.~matricization, of a tensor in mode $j$ is denoted by $\Mat_j(\cdot)$, and the n-mode product in mode $j$ is denote by $\times_j$.  Both of these operations are defined in \cite{kolda2009tensor}[Section 2].

\begin{figure}[t!]
    \centering
        \begin{subfigure}{0.49\textwidth}
        \centering
        \includegraphics[width=\textwidth]{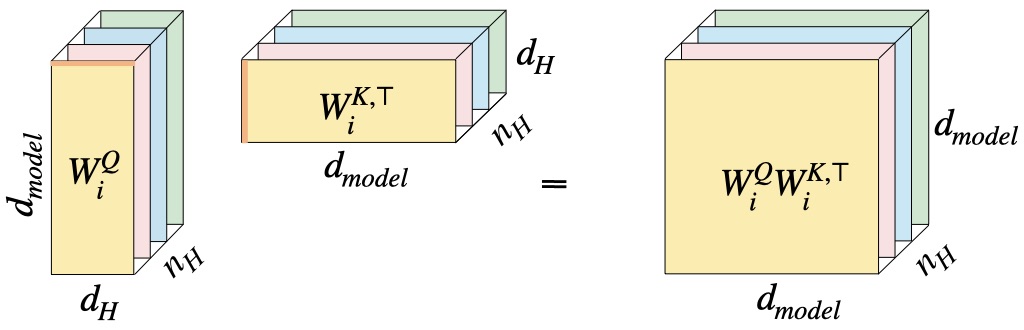}
        \caption{Multihead Attention (MHA)}
        \label{fig_mha}
    \end{subfigure}
    \begin{subfigure}{0.49\textwidth}
        \centering
        \includegraphics[width=\textwidth]{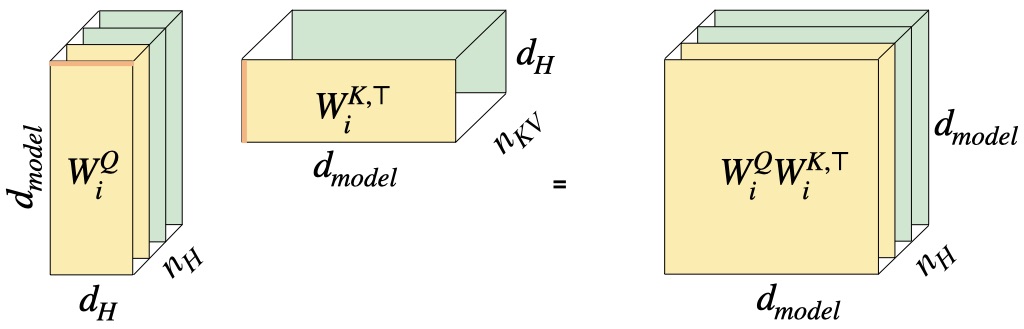}
        \caption{Group-Query Attention (GQA)}
        \label{fig_gqa}
    \end{subfigure}
    
    \begin{subfigure}{0.49\textwidth}
        \centering
        \includegraphics[width=\textwidth]{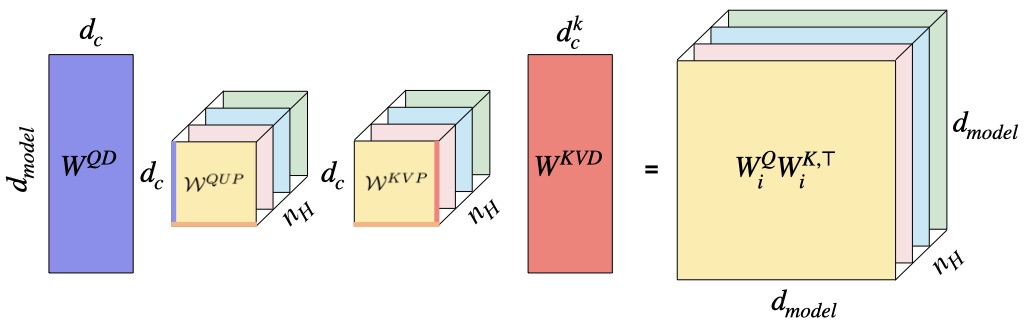}
        \caption{Multihead Latent Attention (MLA)}
        \label{fig_mla}
    \end{subfigure}
     \begin{subfigure}{0.49\textwidth}
        \centering
        \includegraphics[width=\textwidth]{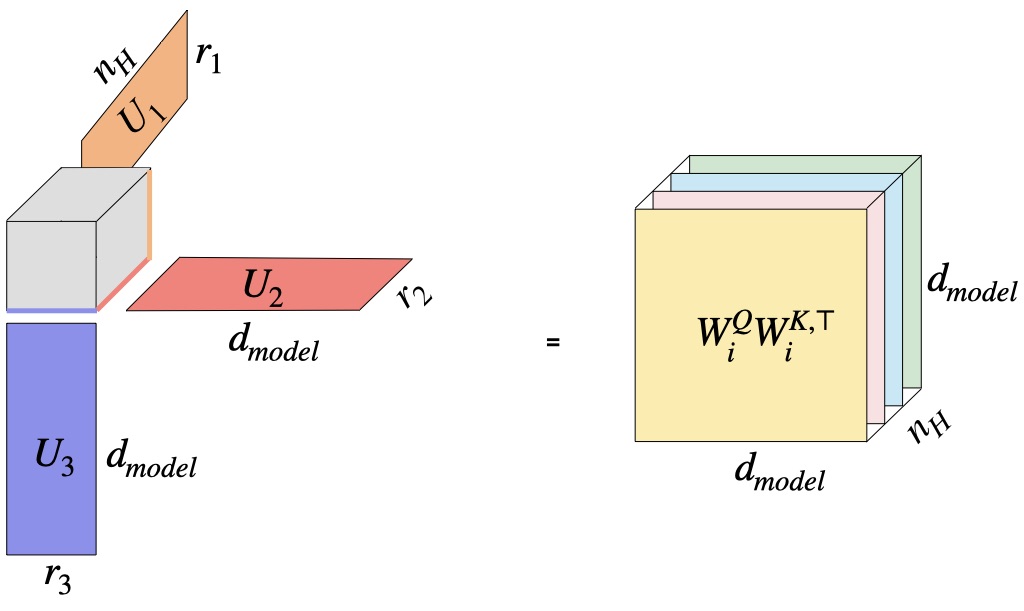}
        \caption{Tucker Attention}
        \label{fig_tucker}
    \end{subfigure}

    \caption{
     Illustration of the pre-softmax attention tensor ${\mathcal{W}}$ under existing factorizations; colored edges denote contraction dimensions. \textbf{MHA:} ${\mathcal{W}}$ is formed by contracting $\dmod \times \dH$ query and key matrices along $\dH$ (orange) for each of the $\nH$ heads, yielding tensor rank $(\nH, \dmod, \dmod)$. \textbf{GQA:} Queries follow the same parametrization as MHA, but only $\nG$ key matrices are used. During contraction along $\dH$ (orange), each key matrix is broadcasted to all queries in its head group, yielding rank $(\nH, \dmod, \nG\dH)$. \textbf{MLA:} Key and query matrices are replaced by down- and up-projections, with up-projections reshaped to $\nH \times \latentDim \times \dH$. Contractions occur along $\latentDim^Q, \latentDim^K$ (red/blue), and along $\dH$ per head, giving rank $(\nH, \dcQ, \dcK)$. At inference, contractions along $\dcQ$ and $\dH$ are precomputed. The post-softmax tensor $\widetilde{\mathcal{W}}$ is parametrized analogously for MHA and GQA; for MLA, only the value matrix has low-rank factorization while the output matrix remains full-rank.
    }
    \label{fig:main}
\end{figure}

\section{Tensor attention reformulation}
\label{sec:sota}

Our strategy is to rewrite the multi-head attention formalism as tensor-valued operations, which allows us to define flexible low-rank training methods for various types of attention formalisms. Let $X\in\mathbb{R}^{\seqlen\times \dmod}$ be an embedded sequence of $N$ tokens with embedding dimension $\dmod$. 
Standard MHA splits the $\dmod\times\dmod$ key, query, and value matrices across $\nH$ heads by a column splitting:
\begin{align*}
\begin{split}
    \WQ &= [\WQ_1,\cdots,\WQ_\nH], \\
    \WK &= [\WK_1,\cdots,\WK_\nH], \\
    \WV &= [\WV_1,\cdots,\WV_\nH],
\end{split}
\end{align*}
where $\WQ_i,\WK_i,\WV_i\in\mathbb{R}^{\dmod\times\dH}$ are the query, key, and value heads, and $\dH:=\dmod/\nH$ is the head dimension.
Using a slightly refined definition of a head, which includes the output projection matrix $\WO_i\in\mathbb{R}^{\dH\times \dmod}$, we have
\begin{align}\label{eq_mha}
\begin{aligned}
    \text{head}_i =\,& \text{Attention}(X\WQ_i, X\WK_i, X\WV_i)\WO_i \\
    =\,& \softmax\left( \frac{X\WQ_i (X\WK_i)^{\top}}{\sqrt{\dH}} \right) X\WV_i \WO_i.
\end{aligned}
\end{align}
where $\softmax$ is the row-wise softmax activation.

Additionally, $\text{head}_i\in\mathbb{R}^{\seqlen\times \dmod}$ which lets us rewrite the MHA formalism as 
\begin{align*}
    \text{MHA}(X) = \sum_{i=1}^\nH \text{head}_i\,.
\end{align*}
We propose to reformulate the attention formalism to allow for the construction of more memory efficient representations. Our derivation is based on two key observations: 

First, the key and query matrices form a rank $\dH$ factorization of the \textit{pre-softmax weight} 
$W_i := \WQ_iW_i^{{\rm K},\top}\in\mathbb{R}^{\dmod\times \dmod}$ per head $i$. 
Analogously value and output matrices form a rank $\dH$ factorization of the \textit{post-softmax weight} $\widetilde W_i := \WV_i \WO_i\in\mathbb{R}^{\dmod\times \dmod}$. 
Then, the MHA mechanism formally becomes
\begin{align*}
    \text{MHA}(X) = \sum_{i=1}^\nH \softmax\left( \frac{XW_i X^{\top}}{\sqrt{\dH}} \right) X\widetilde W_i\,.
\end{align*}
Thus, MHA is able to encode low-rank behavior only per-head but not across heads. 

Second, the \textit{per-head} matrices $W_i$ and $\widetilde W_i$ can be collected as a tensor by stacking over all head indices, i.e., 
\begin{align}
\mathcal{W} &= (W_i)_{i=1}^\nH\in\mathbb{R}^{\nH\times \dmod\times \dmod},  \label{eqn:W_tensor_form}\\
\widetilde{\mathcal{W}} &= (\widetilde W_i^\top)_{i=1}^\nH\in\mathbb{R}^{\nH\times \dmod\times \dmod}. \label{eqn:W_til_tensor_form}
\end{align}

For both $\mathcal{W}$ and $\widetilde{\mathcal{W}}$ the first mode is referred to as the head mode.
For $\mathcal{W}$, the second and third modes correspond to the query and key computations.
For $\widetilde{\mathcal{W}}$, the second and third modes correspond to the output and value computations.
We stack the transpose of $\widetilde{W}_i$ in \eqref{eqn:W_til_tensor_form} for notational convenience; the key and value modes, which are often treated similarly, now align to the third mode.

To reformulate the MHA formula for these tensor-valued weights, we define 
\begin{align}
    \mathcal{H}^{(1)}(X) :=\,& \softmax\left( \frac{\mathcal{W} \times_2 X \times_3 X}{\sqrt{\dH}} \right)\in\mathbb{R}^{\nH\times \seqlen\times \seqlen}\,, \label{eqn:pre-softmax-attn-comp}\\
    \mathcal{H}^{(2)}(X) :=\,& \widetilde{\mathcal{W}} \times_3 X \in\mathbb{R}^{\nH\times \dmod\times \seqlen}\,. \label{eqn:post-softmax-attn-comp}
\end{align}
Here, $\mathcal{H}^{(1)}$ collects the $\seqlen \times \seqlen$ attention matrices across all heads, while $\mathcal{H}^{(2)}$ collects the projected value representations. The MHA output is then recovered by contracting these tensors over the head and sequence dimensions:
\begin{align}\label{eq_full_rank_MHA}
    \text{MHA}(X; \mathcal{W}, \widetilde{\mathcal{W}})_{jk} = \sum_{i,\ell} \mathcal{H}^{(1)}_{i j \ell}(X; \mathcal{W})\mathcal{H}^{(2)}_{i k\ell}(X;\widetilde{\mathcal{W}})\,.
\end{align}

\subsection{\tuckerattn}\label{sec:ltra}

We now aim to find a formulation that enables us to further leverage shared information, e.g., when the column or row space of $W_i$ or $\widetilde{W}_i$ shares the same basis over different heads $i$. The natural framework for capturing such shared structure is the Tucker decomposition, which factorizes a tensor into a compact core and a set of basis matrices along each mode. This motivates the following definition.

\begin{definition}[Tucker Attention]\label{def:tucker_attention}
Let $\mathcal{W} \in \mathbb{R}^{\nH \times \dmod \times \dmod}$ and $\widetilde{\mathcal{W}} \in \mathbb{R}^{\nH \times \dmod \times \dmod}$ denote the pre-softmax and post-softmax attention tensors from \eqref{eqn:W_tensor_form}--\eqref{eqn:W_til_tensor_form}. \emph{Tucker Attention} parametrizes these tensors via Tucker decompositions:
\begin{align}\label{eq:tucker-rep}
    \mathcal{W} = \mathcal{C}\bigtimes_{j=1}^{3} U_j, \qquad \widetilde{\mathcal{W}} = \widetilde{\mathcal{C}}\bigtimes_{j=1}^{3} \widetilde{U}_j,
\end{align}
where $\mathcal{C} \in \mathbb{R}^{\rH \times \rQ \times \rK}$ and $\widetilde{\mathcal{C}} \in \mathbb{R}^{\wtrH \times \wtrQ \times \wtrK}$ are learnable core tensors, and $U_1 \in \mathbb{R}^{\nH \times \rH}$, $U_2 \in \mathbb{R}^{\dmod \times \rQ}$, $U_3 \in \mathbb{R}^{\dmod \times \rK}$ (analogously for $\widetilde{U}_{1,2,3}$) are learnable basis matrices. The Tucker ranks $(\rH, \rQ, \rK)$ and $(\wtrH, \wtrQ, \wtrK)$ control compression along the head, query/output, and key/value modes, respectively. Attention is computed via \eqref{eq_full_rank_MHA}.\looseness=-1
\end{definition}

The basis matrices $U_2, U_3$ (resp.~$\widetilde{U}_2, \widetilde{U}_3$) capture shared subspaces across heads in the query/key (resp.~output/value computations), while $U_1$ and $\widetilde{U}_1$ enable compression across the head mode itself.
This capability is absent in MHA, GQA, and MLA.

\tuckerattn~is both \textit{expressive} and \textit{informative} to parameter efficient attention.
We provide an overview to each of these properties below.

\paragraph{Expressivity}
\tuckerattn~is expressive in that state-of-the-art attention methods such as MQA, GQA, and MLA are special cases of \tuckerattn~with specific ranks. 
The key ingredient to this claim is showing, see \Cref{th:att_to_ten}, that $\mathcal{W}$ in \eqref{eqn:W_tensor_form} admits the following Tucker factorization:
\begin{equation}\label{eqn:W_WKQ_factorization}
    \mathcal{W} = \mathcal{C} \times_2 \WQ\Pi \times_3 \WK\Pi
\end{equation}
where $\Pi$ is a column permutation matrix.
Therefore, low-rank factorizations of $\WQ$ and $\WK$ directly infer the Tucker ranks of $\mathcal{W}$.
Since MQA, GQA, and MLA all apply a specified low-rank factorization to all/some of the key $\WK$, query $\WQ$, and value $\WV$ matrices, the Tucker ranks of $\mathcal{W}$ and $\widetilde{\mathcal{W}}$ for these methods are as follows:
\begin{itemize}[itemsep=0pt,topsep=0pt]
 \item MHA -- $\mathcal{W}$ \& $\widetilde{\mathcal{W}}$: $(\nH,\dmod,\dmod)$
    \item MQA -- $\mathcal{W}$ \& $\widetilde{\mathcal{W}}$: $(\nH,\dmod,\dH)$
    \item GQA -- $\mathcal{W}$ \& $\widetilde{\mathcal{W}}$: $(\nH,\dmod,\dH\nG)$, where $\nG$ is the number of GQA KV heads.
    \item MLA -- $\mathcal{W}$: $(\nH,\dcQ,\dcK)$, $\widetilde{\mathcal{W}}$: $(\nH,\dmod,\dcK)$, where $\dcK$ is the latent dimension for the key and value heads, and $\dcQ$ is the latent dimension for the query heads.
\end{itemize}
\Cref{fig:main} illustrates the low-rank representations and contraction modes of the canonical approximate attention mechanisms, where MQA is the special case of GQA with $\nG=1$ KV heads.
\Cref{sec:special_cases_section} provides a brief overview of MQA, GQA, and MLA and a motivation for the above ranks.  A more formal theory is presented in \Cref{appendix:special_cases_proofs}.

\paragraph{Informed Compression Across All Modes} 
By viewing parameter reduction through Tucker factorization, \tuckerattn~enables compression along all tensor modes: head, query, key, value, and output. This exposes compression opportunities in the head and output modes that MQA, GQA, and MLA leave unexploited by construction.

To validate that such compressible structure exists in practice, we perform a \emph{post-hoc} singular value analysis  of the attention tensors $\mathcal{W}$ and $\widetilde{\mathcal{W}}$ from fully trained GPT-2 models (see \Cref{sec_gpt2} for training details). For each mode, we compute the matricization and plot the normalized singular spectrum in \Cref{fig:gpt_tucker_ranks} (Appendix), with the head mode shown in \Cref{fig:gpt_tucker_ranks:pre_head_paper}.

The analysis reveals three key findings.
First, MHA exhibits similar spectral decay across all modes (\Cref{fig:gpt_tucker_ranks:query,fig:gpt_tucker_ranks:output,fig:gpt_tucker_ranks:key,fig:gpt_tucker_ranks:value}), yet GQA and MLA do not compress the output mode thereby retaining parameters that could be removed.
Second, while GQA uses shared heads for the key and value projections, any low-rank behavior in the head mode is lost once the query and output projections are applied (see \Cref{fig:gpt_tucker_ranks:pre_head,fig:gpt_tucker_ranks:post_head}). 
Last, \Cref{fig:gpt_tucker_ranks:pre_head_paper} shows that MLA and \tuckerattn~(with $\rH=8$) exhibit similar low-rank structure in the head mode, but MLA's parametrization cannot exploit this redundancy. 
Only through the Tucker lens can this head-mode compression be realized.

We claim that \tuckerattn~realizes the low-rank tendencies shown above into quantifiable parameter reduction. 
In \Cref{tab:attention_comparison_params}, we specify the parameter count for training and inference of MHA, MQA, GQA, MLA, and \tuckerattn.
The inference costs are calculated post fusing components (for MLA) for additional savings.
For a clear presentation, we only consider the case of \tuckerattn~where the key, query, value, and output ranks are equal to a given rank $r$, and the head ranks pre- and post-softmax are $\rH$.
The results show that the parameter count of all methods except \tuckerattn~scale as $\mathcal{O}(\dmod^2)$ while \tuckerattn, by compressing in the output mode, scales as $\mathcal{O}(r\dmod + r_1 r^2)$.
Therefore, if we assume $r\ll\dmod$, we expect savings in \tuckerattn~over all other approaches without a significant decrease in validation metrics (see \Cref{sect:numerical_results} for validation metric details).
\Cref{fig:gpt_tucker_ranks} suggests that for GPT2 training, assumption $r\ll\dmod$ is true, and the results in \Cref{sect:numerical_results} indicate that the low-rank ansatz does not harm performance.

\begin{figure}
\centering
    \includegraphics[width=0.8\columnwidth]{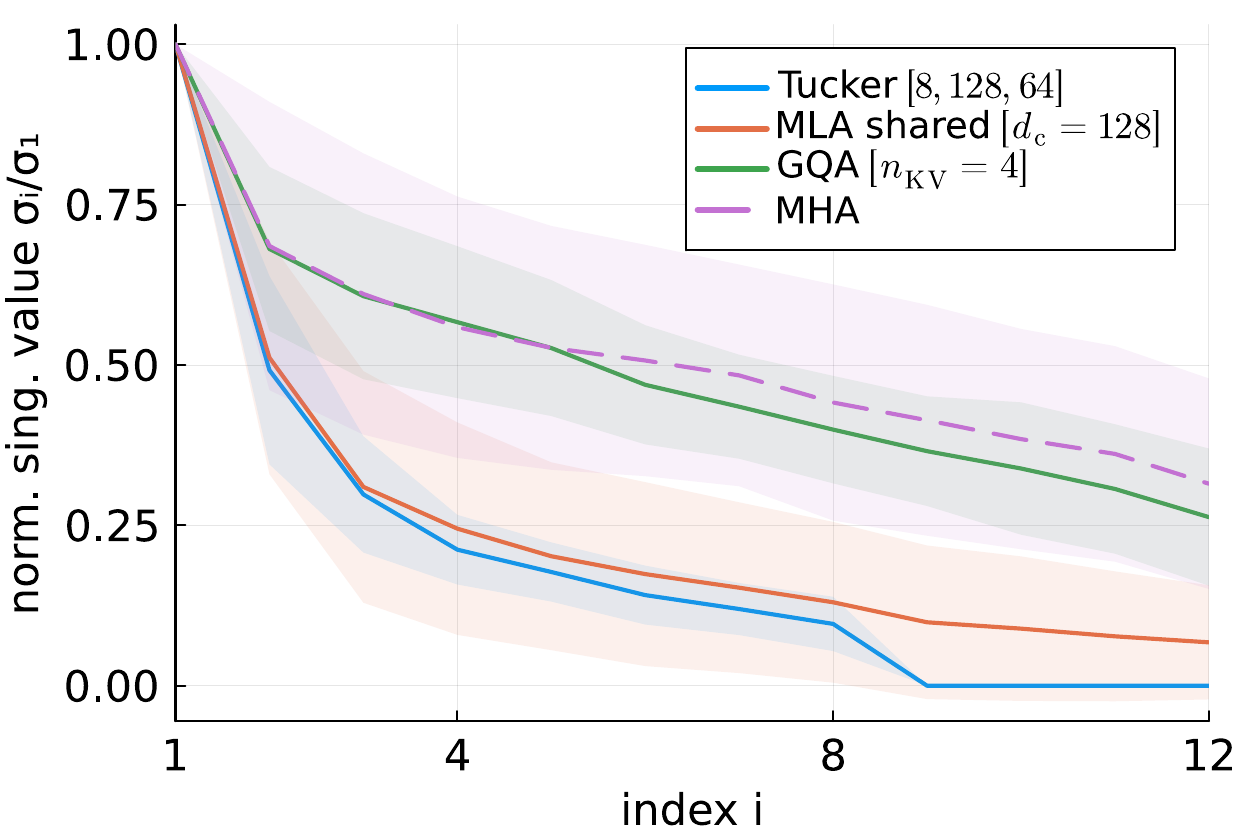}
    \caption{Normalized singular spectrum of pre-softmax head mode, $\Mat_1(\mathcal{W})$, for GPT2 pretraining.  See \Cref{fig:gpt_tucker_ranks} for a detailed description.  MLA shares a similar low-rank behavior across heads as \tuckerattn, but is not able to leverage this fact to reduce parameters.
    }
    \label{fig:gpt_tucker_ranks:pre_head_paper}
    \vspace{-0.5cm}
\end{figure}

\begin{table}[t]
\centering
\caption{Comparison of the memory footprint of MHA, MQA, GQA with $\nG$ KV heads, MLA, and the proposed Tucker Attention.  We count the total parameters of the key, query, value, and output weight matrices. MLA fuses query down and up projections during inference, thus the parameter count changes. We assume $d=\nH\dH$ for MHA, MQA, GQA, $\nG$ is the number of kv heads for GQA, $\latentDim$ is the latent dim/rank of MLA, and $(\rH,r,r)$ are the Tucker ranks of $\mathcal{W}$ and $\widetilde{\mathcal{W}}$ in \tuckerattn. Sequence length is denoted by $\seqlen$. \tuckerattn~enjoys a favorable $\mathcal{O}(r\dmod)$ scaling   compared to the other methods.}
\label{tab:attention_comparison_params}
\resizebox{0.49\textwidth}{!}{%
\begin{tabular}{@{}lccc@{}}
\toprule
Method & \#Params (train) & \#Params (inference) & KV Cache \\
\midrule
Eq.\eqref{eq_full_rank_MHA} & $4\nH \dmod^2$ &$4\dmod^2$ & $4N \nH \dmod$ \\
\midrule
MHA        & $4\dmod^2$ & same as training & $2N\dmod$ \\
MQA        & $2\dmod^2 + 2 \dH \dmod$ & same as training & $2N\dH$ \\
GQA-$\nG$    & $2\dmod^2 + 2\nG \dH \dmod$ & same as training & $2N\nG \dH$ \\
MLA (separated KV) & $\dmod^2 + 6\dmod\latentDim$  & $\dmod^2 + 4\dmod\latentDim$ & $2N\latentDim$      \\
MLA (shared KV)     & $\dmod^2 + 5\dmod\latentDim$  & $\dmod^2 + 2\dmod\latentDim$ & $N\latentDim$      \\
Tucker (separated KV) &$2(\nH\rH + 2r\dmod + \rH r^2)$          & same as training  & $2\seqlen r$ \\
Tucker (shared KV)  &$2\nH\rH + 3r\dmod + 2\rH r^2$          & same as training & $\seqlen r$      \\
\bottomrule
\end{tabular}% 
}
\end{table}

\subsection{MQA, GQA and MLA are special cases of \tuckerattn}\label{sec:special_cases_section}

In this section, we show that attention formulations like MQA, GQA, and MLA are compatible with \tuckerattn~and a given Tucker rank.
In this discussion, we only focus on the ranks of the above methods with respect to pre-softmax compression, i.e., $\mathcal{W}$ in \eqref{eqn:W_tensor_form}.
The post-softmax ranks for $\widetilde{\mathcal{W}}$ and a more complete theory are found in \Cref{appendix:special_cases_proofs}.
Our discussion investigates the rank of $\WQ$ and $\WK$ of different attention mechanisms and concludes the Tucker ranks using Theorem~\ref{th:att_to_ten}.

\subsubsection{Multi and Group Query Attention }

Group query attention with selects $\nG$ K heads  $\WK_i$ (and analogously V heads), each of size $\dmod\times \dH$, see \Cref{fig_gqa}.
Therefore, $\WK$ has a maximum rank of $\nG\dH$.
A similar shared compression technique is applied to the value heads.
Thus the ranks in the key and value modes of $\mathcal{W}$ and $\widetilde{\mathcal{W}}$ are at most $\nG\dH$.
We view MQA as the special case of GQA with $\nG=1$.
\subsubsection{Multihead Latent Attention}
In the case of MLA, the key heads are factorized as $\WK_i = \WDKV \WUK_i$, where $\WDKV \in \mathbb{R}^{\dmod \times \dcK}$ projects inputs into the latent space of dimension $\dcK$ and $\WK_i$ prolongs the inputs up into the head-specific ambient space.
Defining $\WUK = [\WUK_1,\cdots,\WUK_\nH]\in\mathbb{R}^{\dcQ\times\dmod}$, we have $\WK=\WDKV\WUK$, and it is easy to see that $\WK$ has a rank of $\dcK$.
The query heads are factorized similarly to the key heads, yielding $\WQ=\WDQ\WUQ$ where $\WDQ\in\mathbb{R}^{\dmod\times\dcQ}$, and $\WQ$ having a rank of $\dcQ$.

Stacking $\WUK_i$, respectively  $\WUQ_i$, into  tensors $\mathcal{W}^{\rm UK}\in\mathbb{R}^{\nH\times\dH\times\dcK}$ and $\mathcal{W}^{\rm UQ}\in\mathbb{R}^{\nH\times\dH\times\dcQ}$ yields the MLA formulation illustrated in \Cref{fig_mla}. 
It is evident that the two up-projection tensors can be absorbed into the Tucker core of \eqref{eqn:W_WKQ_factorization}, yielding a more parameter-efficient representation, i.e., the \tuckerattn~representation.

Lastly, we consider two versions of the value head compression.  The first, \emph{shared KV}, is the standard approach where the down projection $\WDKV$ is also used for value heads, i.e., $\WV_i=\WDKV\WUV_i$.  
We also consider the case, called \emph{separated KV}, where $\WV$ uses its own down projection, i.e., $\WV_i=\WDV\WUV_i$ where $\WUV\in\mathbb{R}^{\dmod\times\dcK}$.

\section{Compatibility of \tuckerattn}
\label{sect:compatibility}

In this section we show that \tuckerattn~is compatible with KV caching, a modification of RoPE, and Flash-Attention.

\subsection{KV Caching}
To ease notation, we assume $\rK=\widetilde{\rK}$.
KV-caching can be realized in this formulation by computing and caching $K = XU_3 \in\mathbb{R}^{\seqlen\times \rK}$ and $V = X\widetilde{U}_3 \in\mathbb{R}^{\seqlen\times \rK}$.
Further, we can update the attention matrix at the last token $x\in\mathbb{R}^{d_\mathrm{model}}$, i.e., the last row in the attention matrix, as
\begin{align}\label{eq_kv_caching}
    \sigma\left(\frac{\widetilde{\mathcal{C}}\times_1 U_1 \times_2 xU_2 \times_3 K}{\sqrt{\dH}} \right)\times_3V\,.
\end{align}
Caching $K$ and $V$ incurs a storage cost of $2\seqlen\rK$.
Additionally, \tuckerattn~is compatible with a shared key-value down-projection by replacing $K,V$ with $L^{\rm KV}=XU_3 \in\mathbb{R}^{\seqlen\times \rK}$ in \eqref{eq_kv_caching}; further reducing the required KV-cache to $N\rK$.
For $\rK=\latentDim$ and shared KV projection, where $\latentDim$ is the latent dimension in MLA, the cache size requirements between MLA and \tuckerattn~are equal.

\subsection{Rotary Position Embedding (RoPE)}\label{sec_rope}

A key question for practical adoption is whether \tuckerattn~is compatible with rotary position embeddings \cite{su2023RoPE}, the dominant position encoding in modern LLMs. Standard RoPE\footnote{We refer to \cite{su2023RoPE}, Equation 15 for the full definition.} applies rotation matrices $R(\ell,d)\in\mathbb{R}^{d\times d}$ to query and key vectors at token position $\ell$, where the rotation frequency depends on the embedding dimension $d$. The critical property is that the inner product between rotated queries and keys depends only on relative position:
\begin{align}\label{eq_ROPE_condition}
    R(m,d) R(n,d)^\top = R(m-n,d).
\end{align}
In standard MHA, RoPE is applied in the head dimension $\dH$, yielding the pre-softmax entry at query position $m$ and key position $n$:
\begin{equation}\label{eqn:rope_eval}
    X_m \WQ_i R(m,\dH) R(n,\dH)^\top W_i^{{\rm K},\top} X_n^\top.
\end{equation}

Since \tuckerattn~parametrizes the fused product $\WQ_i W_i^{{\rm K},\top}$ rather than individual query and key matrices, the standard placement of RoPE in the head dimension is incompatible. We resolve this by moving the rotation from the head dimension to the latent key dimension, yielding what we call \emph{latent RoPE}.

\begin{definition}[Latent RoPE for Tucker Attention]
Let $\hat{Q} = \mathcal{C}\times_1 U_1 \times_2 (XU_2) \in\mathbb{R}^{\nH\times N\times \rK}$ and $K=XU_3\in\mathbb{R}^{N\times \rK}$, with $\hat{Q}_m$ and $K_n$ denoting the representations at token positions $m$ and $n$. \emph{Latent RoPE} computes the position-aware attention as:
\begin{align}\label{eq_tuckerRope}
    \hat{Q}_m \times_3 \left( K_n R(n,\rK) R(m,\rK)^\top \right) \in\mathbb{R}^{\nH},
\end{align}
where $R(\ell,\rK)\in\mathbb{R}^{\rK\times \rK}$ is the rotary embedding in the latent key space.
\end{definition}

Latent RoPE satisfies the relative position property \eqref{eq_ROPE_condition}; see \Cref{sec_rope_proofs} for the proof. The key insight is that rotational invariance holds regardless of where the rotation matrices are placed; our modification only changes the rotation frequency by operating in the latent dimension rather than the head dimension.

Importantly, latent RoPE is fully compatible with KV caching: the rotation matrices sit between the Tucker down-projector $U_3$ and the core $\mathcal{C}$, so the key rotation $R(n,\rK)$ is computed once and shared across all heads, reducing computational overhead compared to per-head RoPE in MHA and decoupled RoPE in MLA.

\subsubsection{Corollary: Simplified RoPE for MLA}\label{seq_mla_cor}

A significant practical consequence of latent RoPE is that it provides a {simpler alternative to the decoupled RoPE approach} used in MLA \cite{deepseekai2024deepseekv2strongeconomicalefficient}. To our knowledge, this is the first demonstration that MLA is compatible with RoPE without requiring decoupled position encodings.

The original MLA implementation \cite{deepseekai2024deepseekv2strongeconomicalefficient}[Section 2.1.3] introduces decoupled RoPE as a workaround: query and key matrices are partitioned into ``semantic'' and ``rotational'' components, with RoPE applied only to the rotational subset. This adds architectural complexity and prevents full weight fusion at inference time.

Since \tuckerattn~generalizes MLA (\Cref{thm:MLA_tucker_rank}), latent RoPE applies directly to MLA in its native KV-latent space:
\begin{align}\label{eqn:rope_mla}
    X_m \WDQ \WUQ_i W^{\rm UV}_i R(m,\latentDim) R(n,\latentDim)^\top W^{\rm DKV,\top} X_n.
\end{align}
This formulation enables {full inference-time fusion} of the query-side matrices into a single projection $W^{\rm Q,MLA}_i := \WDQ \WUQ_i W^{UV}_i \in \mathbb{R}^{\dmod \times \latentDim}$:
\begin{align}\label{eq_mla_coupled_rope}
    X_m W^{\rm Q,MLA}_i R(m,\latentDim) R(n,\latentDim)^\top W^{\rm DKV,\top} X_n.
\end{align}

This simplification removes the need for separate semantic and rotational pathways, reducing MLA's implementation complexity while preserving KV-cache efficiency. We validate empirically in \Cref{tab_shake} that MLA is comparable with latent RoPE.
\begin{figure}[t]
    \centering
    \begin{subfigure}{0.23\textwidth}
        \centering
        \includegraphics[width=\textwidth]{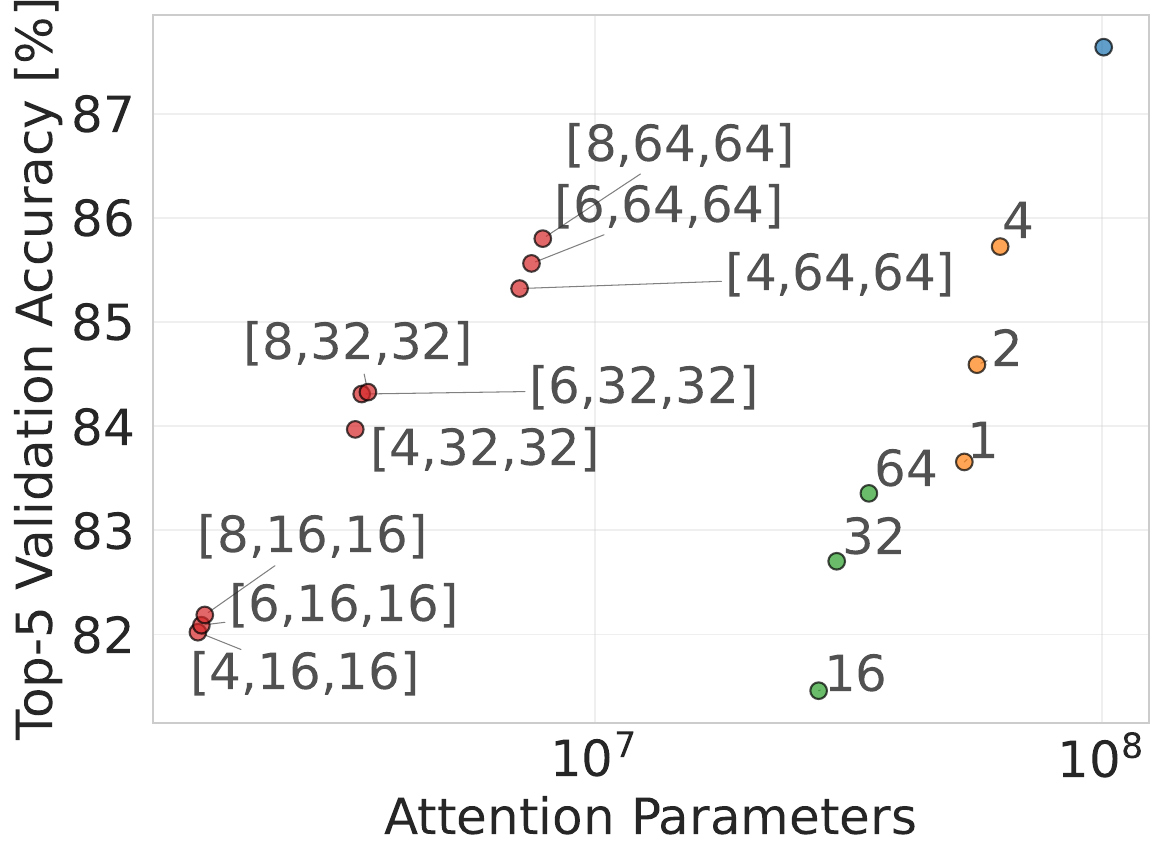}
        \caption{ViT32l top-5 acc}
    \end{subfigure}
        \begin{subfigure}{0.23\textwidth}
        \centering
        \includegraphics[width=\textwidth]{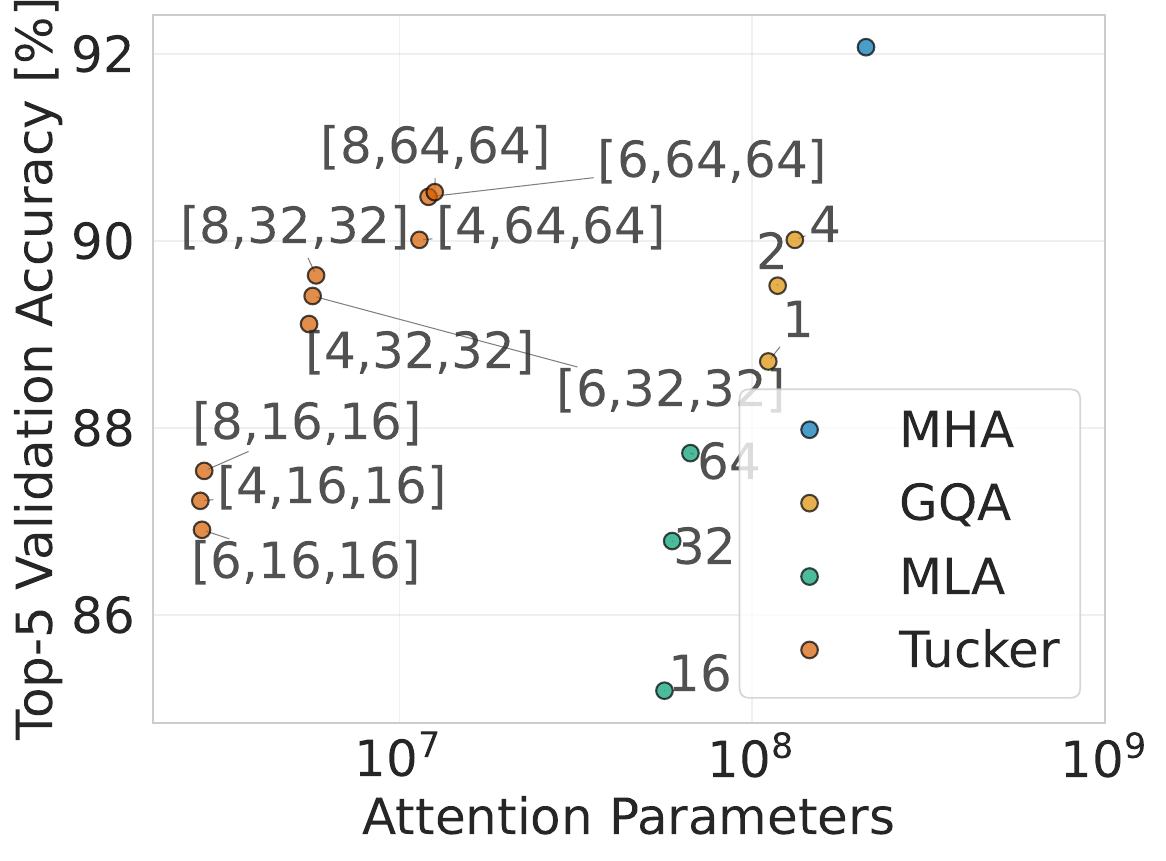}
        \caption{ViT14g top-5 acc}
    \end{subfigure}
    \hfill
    \caption{ViT32l (left) and ViT14g (right) top-5 validation performance on ImageNet1k for MHA and approximate attention mechanisms over total attention parameter count (top-left is best). GQA is presented with $\nG=1$ to $4$ KV heads, where GQA-1 corresponds to MQA. MLA is presented with latent dimension 16-64 with individual kv-weights. \tuckerattn~is presented with ranks $[\rH,\rQ,\rK]$. Is it apparent that \tuckerattn~requires an order of magnitude fewer parameters for comparable accuracy levels and thus shifts the parameter-accuracy pareto-frontier to the top left.
    }
    \label{fig_vit}
    \vspace{-0.4cm}
\end{figure}

\subsection{Flash Attention}\label{sec_flash_attn}

The proposed \tuckerattn~is fully compatible with existing flash-attention implementations, requiring no custom kernels.

We briefly review the memory optimization in flash-attention \cite{dao2023flashattention2fasterattentionbetter}. The key idea is to compute $\softmax\left( \frac{X\WQ_i (X\WK_i)^{\top}}{\sqrt{\dH}} \right) X\WV_i$ from \eqref{eq_mha} without materializing the $\seqlen\times\seqlen$ attention matrix, which would incur prohibitive memory costs for long sequences. Denoting $Q_i=X\WQ_i$, $K_i=X\WK_i$, and $V_i=X\WV_i$, flash-attention avoids this cost by partitioning $Q_i$, $K_i$, $V_i$ into sequence chunks of size $C\times\dH$ that fit in SRAM and accumulating the output via online softmax normalization; thus, the full attention matrix is never stored. The bottleneck then becomes how many times chunks must be loaded from HBM into SRAM.\footnote{SRAM (static random-access memory) refers to the small, fast on-chip cache on GPUs; HBM (high-bandwidth memory) is the larger but slower off-chip memory. On modern GPUs (e.g., NVIDIA H100), SRAM access is roughly 10--20$\times$ faster than HBM, making HBM-to-SRAM transfers the primary bottleneck in memory-bound operations like attention.}

\paragraph{Tucker attention unifies the advantages of MQA and MLA}

Existing efficient attention methods offer complementary benefits for flash-attention:
First, MQA and GQA reduce HBM accesses by sharing key-value heads across query heads, so only $\nH/\nG$ KV chunks are loaded for all $\nH$ query heads.
Second, {MLA} (in fused inference mode) reduces SRAM pressure by loading key and query chunks encoded in latent dimension $\dcK$ instead of head dimension $\dH$, enabling larger chunk sizes.

\tuckerattn~achieves both benefits simultaneously during training and inference:
First, the key and value projections $K = XU_3 \in \mathbb{R}^{\seqlen \times \rK}$ and $V = X\widetilde{U}_3 \in \mathbb{R}^{\seqlen \times \rK}$ are shared across all heads, so only a single KV chunk pair is loaded per attention computation for all query heads, just like MQA.
Second, the chunks are encoded in the $\rK$ latent dimension, corresponding to the key and value ranks. Thus, just like MLA, the chunks loaded into SRAM induce less memory pressure than MHA or GQA chunks with encoding dimension $\dH$.

\paragraph{Implementation}

For evaluation, we precompute $K = XU_{\rK}\in\mathbb{R}^{\seqlen \times \rK}$, $V = X\widetilde{U}_{\rK}\in\mathbb{R}^{\seqlen \times \rK}$ (or load $K, V$ from the KV-cache), and compute the per-head queries $Q_i = \mathcal{C} \times_1 U_1 \times_2 (XU_2) \in \mathbb{R}^{\seqlen \times \rK}$. We then invoke the production-level flash-attention kernel in PyTorch, which natively supports GQA-style grouped heads. Since \tuckerattn~uses a single shared KV pair for all query heads, this corresponds to GQA with $\nG = 1$ (i.e., MQA) from the kernel's perspective.

We note that this evaluation strategy is one of several possible approaches; the Tucker structure may admit more efficient custom kernels that exploit the factorized core $\mathcal{C}$ directly. We defer exploration of specialized kernels to a future work.

\begin{table*}[t]
\centering
\caption{Zero-shot results for GPT-2 [$\dmod=768, \nH = 12, \dH=64$] variants with $\seqlen=1024$  trained on OpenWebText with identical training hyperparameters, without fine-tuning. For each RoPE setting, we mark the and lowest KV cache and parameter storage (measured in MB and assuming BF16 encoding) of GQA, MLA, and \tuckerattn~in bold. \tuckerattn~achieves competitive validation accuracy and perplexity scores with a fewer trainable parameters and KV cache than the baseline methods.}
\label{tab_gpt2}
\vspace{0.5em}
\small
\resizebox{\textwidth}{!}
{

\begin{tabular}{@{}cl cc| c |ccc|cccccccccc@{}}
\toprule
& \textbf{Method} & \textbf{Attn params} & \textbf{KV Cache} & \textbf{Val loss}& \textbf{WikiText2} & \textbf{LAMBADA} & \textbf{Mean} & \textbf{LAMBADA} & \textbf{CBT-CN} & \textbf{CBT-NE} & \textbf{HellaSwag} & \textbf{WinoGrande} & \textbf{PIQA} & \textbf{ARC-E} & \textbf{ARC-C} & \textbf{Mean} \\
 & \textbf{Metric} & [MB] &[MB]    &(lower is better) & \multicolumn{3}{c|}{Perplexity (lower is better)} & \multicolumn{9}{c}{Accuracy [\%] (higher is better)}\\
\midrule\midrule
\multirow{8}{*}{\rotatebox[origin=c]{90}{\textbf{w/o RoPE}}}
& MHA       & 56.62  & 37.74 & 2.861   & 37.15  & 16.95 & 27.05 &27.17 & 82.44 & 62.84 & 31.28 & 51.30 & 64.25 & 41.96 & 24.40 & 48.20\\
\cmidrule{2-17} 
& GQA [$\nG=4$]&37.74 &12.85   & 2.896   & 42.97 &  22.30 & 32.63 &25.13 & 82.24 & 61.52 & 30.77 & 51.78 & 63.17 & 42.17 & 25.94 & {47.84} \\
& GQA [$\nG=2$]& 33.02 & 6.29  & 2.869& 40.63  & 21.67 & 31.15 & 24.37 & 81.44 & 59.80 & 30.42 & 50.99 &  63.38  & 41.25 & 25.94 & 47.20 \\
\cmidrule{2-17} 
& MLA [GQA, $\nG=2$ equiv.]&    37.74   & 6.29 & 2.882 &  40.32  & 23.68 & 32.00  & 22.38 & 82.04 & 61.60 & 30.66 & 51.93 & 62.79 & 40.91 & 25.17 & 47.19  \\
& MLA shared K,V [$\dcK,\dcQ=128$]   & 26.00  & \textbf{3.14} & 2.937   & 43.30  & 24.01 &33.66& 22.13 & 80.92 & 59.72 & 29.76 & 50.59 & 60.93 & 41.30 & 25.09 & 46.31 \\
\cmidrule{2-17} 
& Tucker [8, 128, 128] & 15.74  &6.28  & 2.868& 39.36 & 22.26 & 30.81 & 25.03  & 82.32 & 63.64 & 30.41 & 51.38 & 63.44 & 40.70 & 25.34  & 47.78\\
& Tucker [8,128,64] & {10.24} &\textbf{3.14} & 2.917& 39.73  & 23.22 &  31.47   & 20.96 & 79.84 & 59.80 & 29.63 & 51.78 & 61.48  & 41.84 & 24.49 & 46.23 \\
& Tucker [8,64,64] & \textbf{6.31 }& \textbf{3.14} &2.950 & 44.53  & 23.41 &  33.97 & 22.47  & 80.28 & 58.96 & 28.78 & 51.70 & 63.11 & 41.54 & 25.51 & 46.54 \\

\midrule\midrule
\multirow{8}{*}{\rotatebox[origin=c]{90}{\textbf{w/ RoPE}}}
& MHA (Full RoPE)    & 56.62   & 37.74   &2.831&  35.72  & 20.27 & 27.99  & 25.77 & 83.16 & 64.44 & 32.10 & 49.80 & 63.44 & 43.18 & 25.51 & 48.42 \\
\cmidrule{2-17} 
& GQA [$\nG=4$] &37.74 &12.85  & 2.851 & 36.35 & 19.02 & 27.69 & 26.35 & 82.52 & 63.20 & 31.83 & 52.96 & 63.71 & 43.69 & 26.11 & 48.80\\
& GQA [$\nG=2$]      & 33.02 & 6.29   & 2.872& 37.95  & 22.36 &  30.16 & 24.78 & 81.48 & 63.88 & 31.41 & 51.14 & 62.02 & 42.13 & 26.02 & 47.86       \\
\cmidrule{2-17} 
& MLA shared K,V  [$\dcK,\dcQ=128$]  & 26.00  & \textbf{3.14} & 2.885 & 39.14  & 21.45 &  30.30 & 24.22 & 80.88 & 62.12 & 30.70 & 50.99 & 63.76 & 40.78 & 24.91 & 47.30\\
\cmidrule{2-17} 
& Tucker [8, 128, 128] &15.74 & 6.28 & 2.881 & 38.57 & 23.23 &  30.90& 22.10 & 81.52 & 63.00 & 30.42 & 51.93 & 62.13 & 41.96 & 25.60 & 47.33  \\
& Tucker [8,128,64]    &  {10.24} &\textbf{3.14}& 2.908& 40.16  & 22.94 & 31.55  & 21.93 & 80.56 & 60.84 & 29.93 & 51.54 & 62.62  & 41.16 & 26.71 & 46.91 \\
& Tucker [8, 64, 64] &\textbf{6.31}& \textbf{3.14} & 2.950 & 39.90  & 24.94 & 32.42 & 21.73 & 81.00 & 60.60 & 29.58 & 50.91 & 62.51 & 40.45 & 24.06  &46.36 \\

\bottomrule
\end{tabular}
}
\end{table*}

\section{Numerical Results}\label{sect:numerical_results}

In this section, we compare MHA, GQA, MLA and the proposed \tuckerattn~parametrization on vision transformers and GPT2.
We use the notation for GQA and MLA presented in \Cref{sec:special_cases_section}.
For \tuckerattn, we fix $\rH=\widetilde{\rH}$, $\rQ=\widetilde{\rQ}$, and $\rK=\widetilde{\rK}$, and present the ranks as $(\rH,\rQ,\rK)$.

\subsection{ViT16b on Cifar10/Cifar100 \& ViT32l, ViT16g on ImageNet1k}
\paragraph{Setup} We initialize the model weights\footnote{We refrain from full training due to the associated significant computational expense. Full pretraining results can be found in \Cref{sec_gpt2}.} from the ViT Huggingface endpoints and train on the corresponding down-stream dataset. The low-rank parameterizations GQA, MLA and \tuckerattn~are initialized by SVD-based approximation of the MHA weights; see \Cref{app_vit} for details, full training setup and additional tests.

\paragraph{Findings} \Cref{fig_vit} reports the mean of $5$ training runs, with standard deviation of less than $0.1$\% per setup.
We observe that the proposed \tuckerattn~achieves a significantly better parameter-to-accuracy ratio than GQA and MLA: \tuckerattn~requires almost an order of magnitude fewer trainable parameters to achieve similar performance than GQA and MLA. 
We further observe that lowering the ranks $\rQ$ and $\rK$ is more beneficial for \tuckerattn~than very low $\rH$ ranks, e.g. $\rH\leq2$.

\begin{table*}[t]
\centering
\caption{Validation performance and per-iteration training timing breakdown for LLaMA3-1B variants on OpenWebText2 with $\seqlen=4096$. Note LLaMa3 1B default setup is $\nG=8$.
Training: Timing is reported per training iteration on 2 cluster nodes with 2 H100 GPUs each. 
Decode: We use query sequence length 1, with cached KV sequence length $\seqlen=4096$. 
Timing is reported  on 1 H100 GPU. Peak decode GPU memory load is measured in GB with batch size 1. 
All timings denote the mean over 10 iterations in [ms]. We remark that this implementation is non-optimized research code. \tuckerattn~achieves competitive validation loss and perplexity scores at significant parameter and latency reductions in training and decoding mode.}
\label{tab:owt2_llama1b_timing}
\vspace{0.25em}
\small
\resizebox{0.98\textwidth}{!}
{
\begin{tabular}{l|c|cc|cccc|ccc}
\toprule
& \multicolumn{1}{c|}{ } 
& \multicolumn{2}{c|}{\textbf{Eval}} 
& \multicolumn{4}{c|}{\textbf{Training Efficiency}} 
& \multicolumn{3}{c }{\textbf{Decode Efficiency}}   \\
\cmidrule(lr){2-2} \cmidrule(lr){3-4} \cmidrule(lr){5-8} \cmidrule(lr){9-11} 
\textbf{Method} & \textbf{Attn Params [MB]} 
& \textbf{PPL} $\downarrow$ & \textbf{Val loss} $\downarrow$ 
& \textbf{Fwd [ms]} $\downarrow$ & \textbf{Bwd [ms]} $\downarrow$ & \textbf{Opt [ms]} $\downarrow$ & \textbf{Train Iter [ms]} $\downarrow$ 
&  \textbf{KV Cache [MB]} & \textbf{Peak Mem [GB]} &\textbf{Latency [ms]} $\downarrow$  \\
\midrule
MHA        & 268 & 12.87 & 2.55 &   1.58 & 3.51 & 0.101 & 5.19 & 268 & 6.5 & 1.379   \\

GQA\, [$\nG=8$] & 168 & 13.62 & 2.61 &   1.47 & 3.30 & 0.087 & 4.87 & 67 & 5.7 & 1.310  \\
GQA\, [$\nG=2$] & 142 & 13.87 & 2.63 &   1.44 & 3.20 & 0.086 & 4.73 & 16.7 & 5.7 & 1.305   \\
MQA [$\nG=1$]   & 138 & 14.27 & 2.64 &   1.43 & 3.13 & 0.083 & 4.67 & 8.38 & 5.6 & 1.296  \\
MLA [$\dcK,\dcQ=128$]   &  93.3 & 14.54 & 2.68   & 1.37 & 2.96 & 0.079 & 4.42 &16.7&5.9&1.341\\
MLA  [$\dcK,\dcQ=64$]   &  79.2 & 14.77 & 2.69   & 1.34 & 2.94 & 0.076 & 4.36 &8.38 &5.6&1.317\\
MLA  [$\dcK,\dcQ=32$]  & 73.4 & 15.65 & 2.75     & 1.33 & 2.88 & 0.075 & 4.30 & 4.19 & 5.4 &1.292\\
\midrule
Tucker [$32,128,128$] & 33.5 & 14.29 & 2.65 &  1.40 & 3.02 & 0.071 & 4.41 & 16.7 & 5.4 & 1.306   \\
Tucker [$32,128,64$]  & 21.0 & 14.60 & 2.68 &   1.35 & 2.89 & 0.069 & 4.33 & 8.38 &5.2 & 1.294  \\
Tucker [$32,64,64$]   & 12.6 & 14.76 & 2.69 &  1.30 & 2.70 & 0.067 & 4.05 & 8.38 & 5.2 & 1.274  \\
\bottomrule
\end{tabular}
}
\end{table*}

\subsection{GPT2 on OpenWebText}\label{sec_gpt2}
\paragraph{Setup} We compare MHA, GQA, MLA and the proposed \tuckerattn~attention parametrizations on GPT2. We initialize the model from scratch with random weights, fixing the random seed for comparability between methods. We use the same training hyperparameters for all methods, see \Cref{app_gpt2} for details. For each attention mechanism we train GPT2 twice: once equipped with a position embedding vector in the embedding layer and once using RoPE, where we remove the position embedding vector and instead apply RoPE at each pre-softmax attention layer. 
\textbf{MHA and GQA setup: } Both methods are implemented canonically.
\textbf{MLA setup:} Unless noted otherwise, we follow the implementation of \cite{deepseekai2024deepseekv2strongeconomicalefficient} for consistency. Thus we train MLA in \textit{unfused mode} with \textit{decoupled  RoPE}.
\textbf{\tuckerattn~ setup:} We use the parametrization of \Cref{eq:tucker-rep}, and the proposed variation of \Cref{sec_rope}.

\paragraph{Findings} \Cref{fig_gpt2} (Appendix) reports the validation loss curve\footnote{The training loss curve follows the same structure.} of GPT2 with classical position embedding for MHA, GQA ($\nG=2$), MLA ($\dcQ=\dcK=128$) with shared KV down-projection, and \tuckerattn~with ranks $(8,128,64)$. In this case, MLA and \tuckerattn~have the same KV cache size during inference. We observe that \tuckerattn~converges at the same rate as the canonical methods, reaching a lower loss than MLA. Throughout the GPT2 experiments we observed that MLA with shared KV exhibits small instabilities, and even diverges for $\dcK=\dcQ\leq 128$. 

\Cref{tab_gpt2} compares the GPT2 variants' validation performance on a set of one-shot validation tests, reporting perplexity (lower is better), validation loss (lower is better), and validation accuracy (higher is better), as well as the total number of trainable parameters in the attention layers and the total required KV cache during inference. In the category w/o RoPE, we observe that \tuckerattn~with ranks $(8,128,64)$ requires about $18$\% of the parameters of MHA and $39$\% of the MLA parameters to obtain competitive validation results. That is, \tuckerattn~achieves the similar mean perplexity and similar mean accuracy as the baseline methods. 

Similarly, we observe in the second part of the table that \tuckerattn~is compatible with latent RoPE, implemented using \Cref{eq_tuckerRope}. The perplexity and test accuracy values of GPT-2 are slightly improved compared to the models without RoPE, mirroring the modest gain in test scores of MLA, GQA, and MHA. 

Next, we validate the insights of \Cref{sec:ltra} on expressivity of GQA and MLA. In particular, MLA with $\dcQ=768$, $\dcK=128$, and a separate KV down projection expresses the same ranks of $\mathcal{W}$ as GQA with $\nG=2$. Since the query weights are full-rank, we fuse  $\WDQ \WUQ_i W^{UV}_i$ into one matrix for training stability. We refer to this MLA setup as ``MLA [GQA, $\nG=2$ equiv.]'' in \Cref{tab_gpt2}, and observe that both the mean perplexity and mean validation accuracy are comparable to GQA $\nG=2$, confirming the expressivity hypothesis. \looseness=-1
\vspace{-0.25cm}
\subsection{GPT2 on Shakespeare}\label{sec_shakespeare}
\vspace{-0.2cm}
\paragraph{Setup} We repeat the setup of \Cref{sec_gpt2}, using the smaller Shakespeare dataset and reduced sequence length of $\seqlen=256$, see \Cref{app_gpt2} for details.

\vspace{-0.5cm}

\paragraph{Findings} 
We observe in \Cref{tab_shake} that \tuckerattn~achieves a comparable validation loss values using over an order of magnitude fewer attention parameters and KV cache than the baseline methods, including MLA. Further, we observe that shared KV down-projection is compatible with \tuckerattn~at a slight increase in validation accuracy, which is expected due to the further reduction in trainable parameters. This effect is also observable when comparing MLA with separate and shared KV down-projection. 
Validating Corollary \cref{seq_mla_cor}, we observe that MLA is compatible with latent RoPE.

\begin{figure}[t]
    \centering
    \begin{subfigure}{0.23\textwidth}
        \centering
        \includegraphics[width=\textwidth]{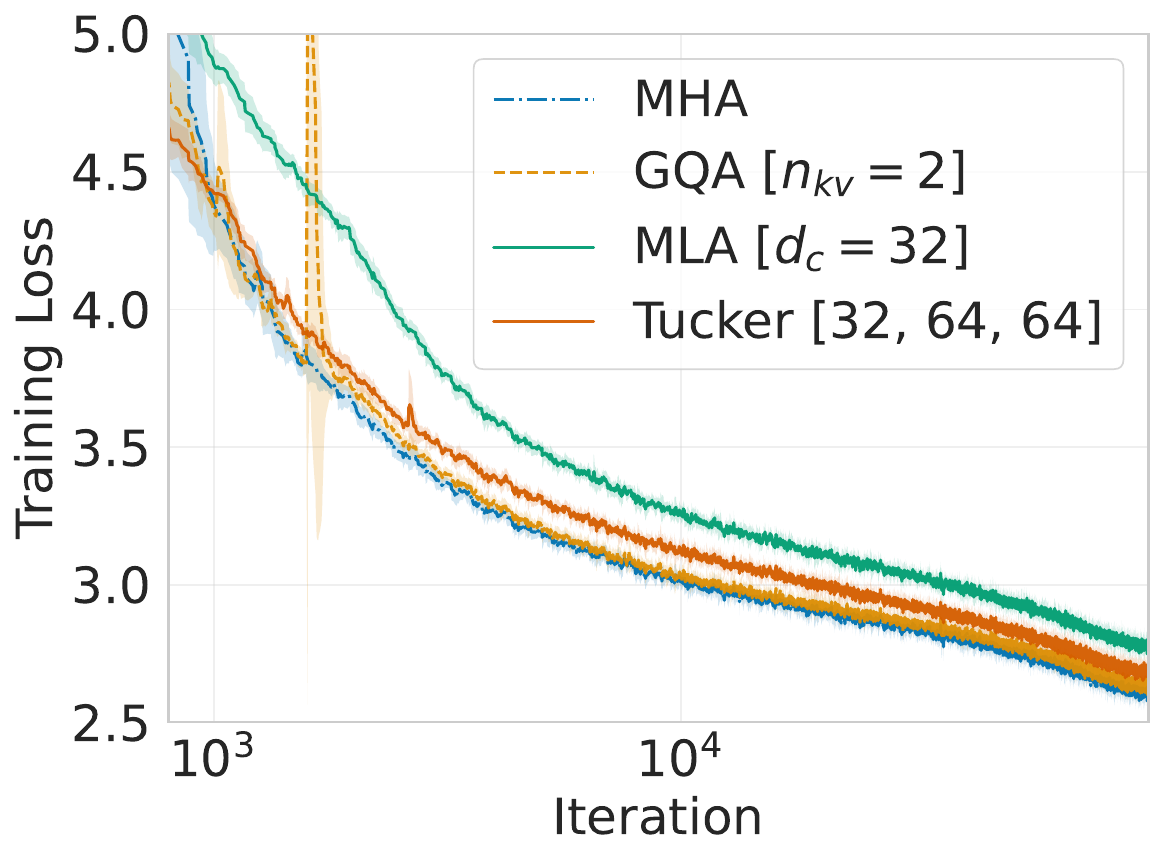}
        \caption{Training loss history. \tuckerattn~converges as well as the baseline methods.\\}
    \end{subfigure}
    \hfill
    \begin{subfigure}{0.23\textwidth}
        \centering
        \includegraphics[width=\textwidth]{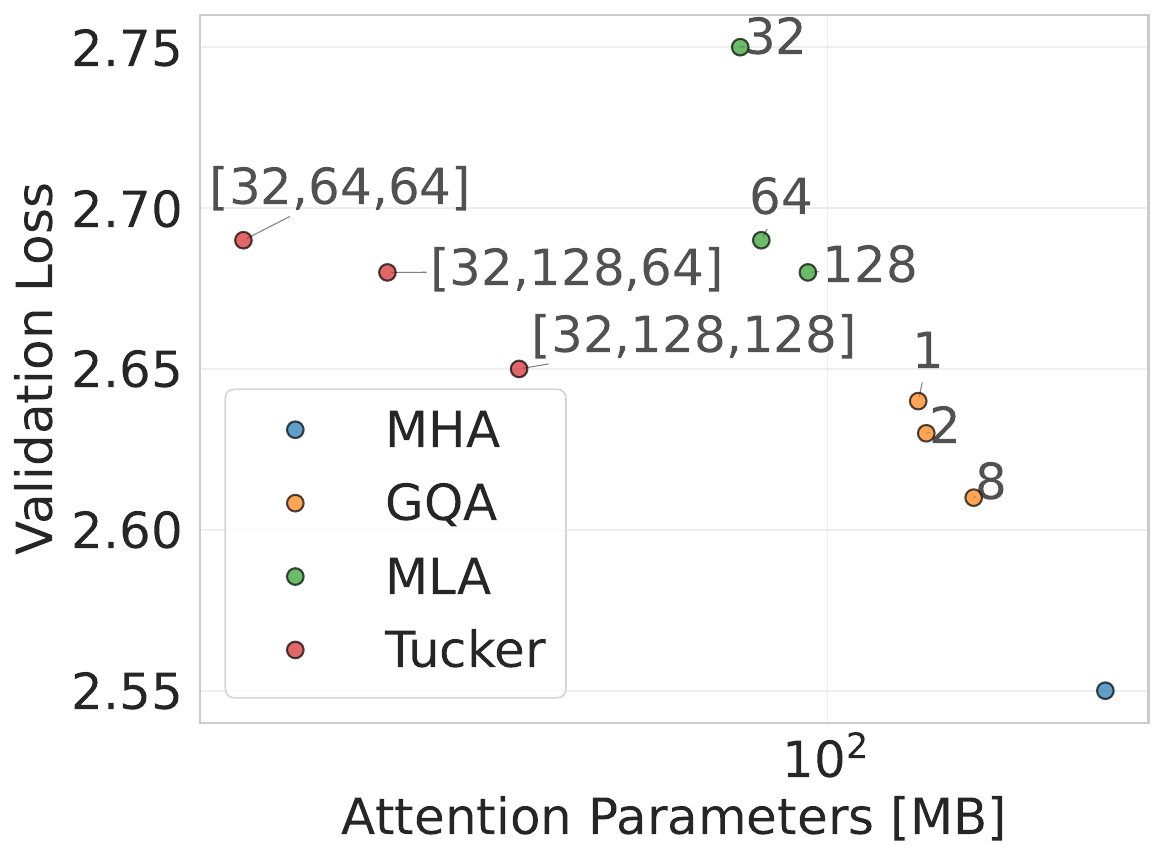}
        \caption{Validation loss potted against parameter count. \tuckerattn~pushes the Pareto frontier to the lower left.}
    \end{subfigure}
    \caption{Llama3-1B training cross-entropy loss (left) and final time validation cross-entropy loss (right) on OpenWebtext2 with RoPE in Bf16 floating point accuracy. Approximate attention methods are MLA % with $\dcQ=\dcK=128$, 
    GQA, and \tuckerattn. \tuckerattn~ converges well, despite having only a fraction of the trainable parameters of the other methods, and does not exhibits instabilities.}
    \label{fig_gpt2}
    \vspace{-0.75cm}
\end{figure}

\subsection{LLaMA3-1B on OpenWebText2}\label{sec:owt2_llama1b}
\paragraph{Setup}
We train a 1 Billion parameter LLaMA~3-style decoder-only transformer from scratch on OpenWebText2~\cite{openWebText2} using GPT-NeoX~\cite{gpt-neox-library}.
The baseline attention mechanism is GQA with $\nH=32$ query heads, with head dimension $\dH=64$ and $\nG=8$ KV heads, and sequence length $\seqlen=4096$. See \ref{sec_app_llama} for full details on training hyperparameters.
We compare to MHA, MQA, and \tuckerattn~in \cref{tab:owt2_llama1b_timing}, where each setup is run once using a fixed random seed. 
Across methods, all non-attention components (embeddings, MLP, norms, RoPE settings) and training hyperparameters are kept fixed. 
GPT-NeoX\footnote{We use a fork of Eleuther AI's github repository \url{https://github.com/eleutherai/gpt-neox}.} serves as the framework for the test, and we use the built-in timers of the megatron backend to measure wall-time cost. Reported timings reflect end-to-end evaluation, including uncompressed MLP layers. We remark that the \tuckerattn layer is non-optimized research code and the timing results should be treated as a general indicated rather than a definite roofline. Training is facilitated on 2 cluster nodes with 2 Nvidia H100 each, and the training timings are specific to this setup. Inference decoding is facilitated on one H100 GPU with  $\seqlen=4096$ cached KV tokens, $1$ query token, and batch size $1$. 

\paragraph{Findings}
A similar trend appears as in the GPT-2 results: \tuckerattn~uses only 10–20\% of the LLaMA3 baseline (GQA, $\nG=8$), with a modest 1–3\% increase in validation cross-entropy loss. It reduces the KV cache by 4–8$\times$, depending on rank. This reduction in parameters and KV cache lowers training cost, decreasing iteration wall time by up to 20\% vs. MHA and 15\% vs. the GQA ($\nG=8$) baseline. Additionally, \tuckerattn~outperforms MLA in both validation loss and parameter count.

\paragraph{We conclude}that the introduced viewpoint of treating the pre- and post-softmax weights as one tensor-valued object each yields several practical benefits. The deduced \tuckerattn~ scheme is more parameter efficient than existing methods, allows introducing latent RoPE, is compatible with KV caching, and combines the memory benefits of MLA and GQA for FlashAttention during training and inference.

\section*{Acknowledgments}

This manuscript has been authored by UT-Battelle, LLC under Contract No.~DE-AC05-00OR22725 with the U.S.~Department of Energy. The United States Government retains and the publisher, by accepting the article for publication, acknowledges that the United States Government retains a non-exclusive, paid-up, irrevocable, world-wide license to publish or reproduce the published form of this manuscript, or allow others to do so, for United States Government purposes. The Department of Energy will provide public access to these results of federally sponsored research in accordance with the DOE Public Access Plan(\url{http://energy.gov/downloads/doe-public-access-plan}).

S.~Schotth\"ofer and S.~Schnake were supported by the Laboratory Directed Research and Development Program of Oak Ridge National Laboratory (ORNL), managed by UT-Battelle, LLC for the U.S. Department of Energy.

This research used resources of the Experimental Computing Laboratory (ExCL) at the Oak Ridge National Laboratory, which is supported by the Office of Science of the U.S.~Department of Energy under Contract No.~DE-AC05-00OR22725.

T.~Klein and S.~Sager received funding from the German Federal Joint Committee (Grant 01VSF23017), from the European Regional Development Fund (Grants Timing Matters and IntelAlgen) under the European Union’s Horizon Europe Research and Innovation Program, and from the German Research Foundation DFG (Grant GRK 2297), which we gratefully acknowledge.

\section*{Impact Statement}

This paper presents work whose goal is to advance the field of parameter efficient Machine Learning. There are many potential societal consequences of our work, none which we feel must be specifically highlighted here.

\bibliography{example_paper}

\begin{thebibliography}{31}
\providecommand{\natexlab}[1]{#1}
\providecommand{\url}[1]{\texttt{#1}}
\expandafter\ifx\csname urlstyle\endcsname\relax
  \providecommand{\doi}[1]{doi: #1}\else
  \providecommand{\doi}{doi: \begingroup \urlstyle{rm}\Url}\fi

\bibitem[Ainslie et~al.(2023)Ainslie, Lee-Thorp, de~Jong, Zemlyanskiy, Lebron, and Sanghai]{ainslie2023gqa}
Ainslie, J., Lee-Thorp, J., de~Jong, M., Zemlyanskiy, Y., Lebron, F., and Sanghai, S.
\newblock Gqa: Training generalized multi-query transformer models from multi-head checkpoints.
\newblock In \emph{The 2023 Conference on Empirical Methods in Natural Language Processing}, 2023.

\bibitem[Andonian et~al.(2023)Andonian, Anthony, Biderman, Black, Gali, Gao, Hallahan, Levy-Kramer, Leahy, Nestler, Parker, Pieler, Phang, Purohit, Schoelkopf, Stander, Songz, Tigges, Thérien, Wang, and Weinbach]{gpt-neox-library}
Andonian, A., Anthony, Q., Biderman, S., Black, S., Gali, P., Gao, L., Hallahan, E., Levy-Kramer, J., Leahy, C., Nestler, L., Parker, K., Pieler, M., Phang, J., Purohit, S., Schoelkopf, H., Stander, D., Songz, T., Tigges, C., Thérien, B., Wang, P., and Weinbach, S.
\newblock {GPT-NeoX: Large Scale Autoregressive Language Modeling in PyTorch}, 9 2023.
\newblock URL \url{https://www.github.com/eleutherai/gpt-neox}.

\bibitem[Bahdanau et~al.(2014)Bahdanau, Cho, and Bengio]{bahdanau2014neural}
Bahdanau, D., Cho, K., and Bengio, Y.
\newblock Neural machine translation by jointly learning to align and translate.
\newblock \emph{arXiv preprint arXiv:1409.0473}, 2014.

\bibitem[Behnke \& Heafield(2020)Behnke and Heafield]{Behnke2020}
Behnke, M. and Heafield, K.
\newblock Losing heads in the lottery: Pruning transformer.
\newblock In \emph{Proceedings of the 2020 Conference on Empirical Methods in Natural Language Processing (EMNLP)}, pp.\  2664–2674. Association for Computational Linguistics (ACL), November 2020.
\newblock ISBN 978-1-952148-60-6.
\newblock URL \url{https://2020.emnlp.org/}.
\newblock The 2020 Conference on Empirical Methods in Natural Language Processing, EMNLP 2020 ; Conference date: 16-11-2020 Through 20-11-2020.

\bibitem[Beltagy et~al.(2020)Beltagy, Peters, and Cohan]{beltagy2020longformer}
Beltagy, I., Peters, M.~E., and Cohan, A.
\newblock Longformer: The long-document transformer.
\newblock In \emph{arXiv preprint arXiv:2004.05150}, 2020.

\bibitem[Bhojanapalli et~al.(2020)Bhojanapalli, Yun, Rawat, Reddi, and Kumar]{bhojanapalli2020low}
Bhojanapalli, S., Yun, C., Rawat, A.~S., Reddi, S., and Kumar, S.
\newblock Low-rank bottleneck in multi-head attention models.
\newblock In \emph{International conference on machine learning}, pp.\  864--873. PMLR, 2020.

\bibitem[Brunner et~al.(2020)Brunner, Liu, Pascual, Richter, Ciaramita, and Wattenhofer]{Brunner2020On}
Brunner, G., Liu, Y., Pascual, D., Richter, O., Ciaramita, M., and Wattenhofer, R.
\newblock On identifiability in transformers.
\newblock In \emph{International Conference on Learning Representations}, 2020.
\newblock URL \url{https://openreview.net/forum?id=BJg1f6EFDB}.

\bibitem[Chaturantabut \& Sorensen(2010)Chaturantabut and Sorensen]{deim_paper}
Chaturantabut, S. and Sorensen, D.~C.
\newblock Nonlinear model reduction via discrete empirical interpolation.
\newblock \emph{SIAM Journal on Scientific Computing}, 32\penalty0 (5):\penalty0 2737--2764, 2010.
\newblock \doi{10.1137/090766498}.
\newblock URL \url{https://doi.org/10.1137/090766498}.

\bibitem[Cordonnier et~al.(2020)Cordonnier, Loukas, and Jaggi]{cordonnier2020multi}
Cordonnier, J.-B., Loukas, A., and Jaggi, M.
\newblock Multi-head attention: Collaborate instead of concatenate.
\newblock \emph{arXiv preprint arXiv:2006.16362}, 2020.

\bibitem[Dao(2023)]{dao2023flashattention2fasterattentionbetter}
Dao, T.
\newblock Flashattention-2: Faster attention with better parallelism and work partitioning, 2023.
\newblock URL \url{https://arxiv.org/abs/2307.08691}.

\bibitem[Dao et~al.(2022)Dao, Fu, Ermon, Rudra, and Ré]{dao2022flashattentionfastmemoryefficientexact}
Dao, T., Fu, D.~Y., Ermon, S., Rudra, A., and Ré, C.
\newblock Flashattention: Fast and memory-efficient exact attention with io-awareness, 2022.
\newblock URL \url{https://arxiv.org/abs/2205.14135}.

\bibitem[DeepSeek-AI et~al.(2024)DeepSeek-AI, Liu, Feng, Wang, Wang, Liu, Zhao, Dengr, Ruan, Dai, Guo, Yang, Chen, Ji, Li, Lin, Luo, Hao, Chen, Li, Zhang, Xu, Yang, Zhang, Ding, Xin, Gao, Li, Qu, Cai, Liang, Guo, Ni, Li, Chen, Yuan, Qiu, Song, Dong, Gao, Guan, Wang, Zhang, Xu, Xia, Zhao, Zhang, Li, Wang, Zhang, Zhang, Tang, Li, Tian, Huang, Wang, Zhang, Zhu, Chen, Du, Chen, Jin, Ge, Pan, Xu, Chen, Li, Lu, Zhou, Chen, Wu, Ye, Ma, Wang, Zhou, Yu, Zhou, Zheng, Wang, Pei, Yuan, Sun, Xiao, Zeng, An, Liu, Liang, Gao, Zhang, Li, Jin, Wang, Bi, Liu, Wang, Shen, Chen, Chen, Nie, Sun, Wang, Liu, Xie, Yu, Song, Zhou, Yang, Lu, Su, Wu, Li, Wei, Zhu, Xu, Huang, Li, Zhao, Sun, Li, Wang, Zheng, Zhang, Xiong, Zhao, He, Tang, Piao, Dong, Tan, Liu, Wang, Guo, Zhu, Wang, Zou, Zha, Ma, Yan, You, Liu, Ren, Ren, Sha, Fu, Huang, Zhang, Xie, Hao, Shao, Wen, Xu, Zhang, Li, Wang, Gu, Li, and Xie]{deepseekai2024deepseekv2strongeconomicalefficient}
DeepSeek-AI, Liu, A., Feng, B., Wang, B., Wang, B., Liu, B., Zhao, C., Dengr, C., Ruan, C., Dai, D., Guo, D., Yang, D., Chen, D., Ji, D., Li, E., Lin, F., Luo, F., Hao, G., Chen, G., Li, G., Zhang, H., Xu, H., Yang, H., Zhang, H., Ding, H., Xin, H., Gao, H., Li, H., Qu, H., Cai, J.~L., Liang, J., Guo, J., Ni, J., Li, J., Chen, J., Yuan, J., Qiu, J., Song, J., Dong, K., Gao, K., Guan, K., Wang, L., Zhang, L., Xu, L., Xia, L., Zhao, L., Zhang, L., Li, M., Wang, M., Zhang, M., Zhang, M., Tang, M., Li, M., Tian, N., Huang, P., Wang, P., Zhang, P., Zhu, Q., Chen, Q., Du, Q., Chen, R.~J., Jin, R.~L., Ge, R., Pan, R., Xu, R., Chen, R., Li, S.~S., Lu, S., Zhou, S., Chen, S., Wu, S., Ye, S., Ma, S., Wang, S., Zhou, S., Yu, S., Zhou, S., Zheng, S., Wang, T., Pei, T., Yuan, T., Sun, T., Xiao, W.~L., Zeng, W., An, W., Liu, W., Liang, W., Gao, W., Zhang, W., Li, X.~Q., Jin, X., Wang, X., Bi, X., Liu, X., Wang, X., Shen, X., Chen, X., Chen, X., Nie, X., Sun, X., Wang, X., Liu, X., Xie, X., Yu, X., Song, X., Zhou, X., Yang,
  X., Lu, X., Su, X., Wu, Y., Li, Y.~K., Wei, Y.~X., Zhu, Y.~X., Xu, Y., Huang, Y., Li, Y., Zhao, Y., Sun, Y., Li, Y., Wang, Y., Zheng, Y., Zhang, Y., Xiong, Y., Zhao, Y., He, Y., Tang, Y., Piao, Y., Dong, Y., Tan, Y., Liu, Y., Wang, Y., Guo, Y., Zhu, Y., Wang, Y., Zou, Y., Zha, Y., Ma, Y., Yan, Y., You, Y., Liu, Y., Ren, Z.~Z., Ren, Z., Sha, Z., Fu, Z., Huang, Z., Zhang, Z., Xie, Z., Hao, Z., Shao, Z., Wen, Z., Xu, Z., Zhang, Z., Li, Z., Wang, Z., Gu, Z., Li, Z., and Xie, Z.
\newblock Deepseek-v2: A strong, economical, and efficient mixture-of-experts language model, 2024.
\newblock URL \url{https://arxiv.org/abs/2405.04434}.

\bibitem[Gao et~al.(2020)Gao, Biderman, Black, Golding, Hoppe, Foster, Phang, He, Thite, Nabeshima, Presser, and Leahy]{openWebText2}
Gao, L., Biderman, S., Black, S., Golding, L., Hoppe, T., Foster, C., Phang, J., He, H., Thite, A., Nabeshima, N., Presser, S., and Leahy, C.
\newblock The {P}ile: An 800gb dataset of diverse text for language modeling.
\newblock \emph{arXiv preprint arXiv:2101.00027}, 2020.

\bibitem[Gokaslan \& Cohen(2019)Gokaslan and Cohen]{Gokaslan2019OpenWeb}
Gokaslan, A. and Cohen, V.
\newblock Openwebtext corpus.
\newblock \url{http://Skylion007.github.io/OpenWebTextCorpus}, 2019.

\bibitem[Gu et~al.(2025)Gu, Zhou, Iacovides, and Mandic]{gu2025tensorllm}
Gu, Y., Zhou, W., Iacovides, G., and Mandic, D.
\newblock Tensorllm: Tensorising multi-head attention for enhanced reasoning and compression in llms.
\newblock \emph{arXiv preprint arXiv:2501.15674}, 2025.

\bibitem[Hu et~al.(2022)Hu, Shen, Wallis, Allen-Zhu, Li, Wang, Wang, and Chen]{hu2022lora}
Hu, E.~J., Shen, Y., Wallis, P., Allen-Zhu, Z., Li, Y., Wang, S., Wang, L., and Chen, W.
\newblock Lora: Low-rank adaptation of large language models.
\newblock In \emph{International Conference on Learning Representations (ICLR)}, 2022.
\newblock URL \url{https://arxiv.org/abs/2106.09685}.

\bibitem[Karpathy(2015)]{karpathy2015char}
Karpathy, A.
\newblock The unreasonable effectiveness of recurrent neural networks.
\newblock \url{http://karpathy.github.io/2015/05/21/rnn-effectiveness/}, 2015.

\bibitem[Kolda \& Bader(2009)Kolda and Bader]{kolda2009tensor}
Kolda, T.~G. and Bader, B.~W.
\newblock Tensor decompositions and applications.
\newblock \emph{SIAM review}, 51\penalty0 (3):\penalty0 455--500, 2009.

\bibitem[Li et~al.(2018)Li, Tu, Yang, Lyu, and Zhang]{li2018multi}
Li, J., Tu, Z., Yang, B., Lyu, M.~R., and Zhang, T.
\newblock Multi-head attention with disagreement regularization.
\newblock In \emph{Proceedings of the 2018 Conference on Empirical Methods in Natural Language Processing}, pp.\  2897--2903, 2018.

\bibitem[Li et~al.(2021)Li, Cotterell, and Sachan]{Li2021}
Li, J., Cotterell, R., and Sachan, M.
\newblock Differentiable subset pruning of transformer heads.
\newblock \emph{Transactions of the Association for Computational Linguistics}, 9:\penalty0 1442--1459, 12 2021.
\newblock \doi{10.1162/tacl_a_00436}.
\newblock URL \url{https://doi.org/10.1162/tacl_a_00436}.

\bibitem[Liu et~al.(2024)Liu, Feng, Wang, Wang, Liu, Zhao, Dengr, Ruan, Dai, Guo, et~al.]{liu2024deepseek}
Liu, A., Feng, B., Wang, B., Wang, B., Liu, B., Zhao, C., Dengr, C., Ruan, C., Dai, D., Guo, D., et~al.
\newblock Deepseek-v2: A strong, economical, and efficient mixture-of-experts language model.
\newblock \emph{arXiv preprint arXiv:2405.04434}, 2024.

\bibitem[Meng et~al.(2025)Meng, Tang, Tang, Yao, Sun, and Zhang]{meng2025transmla}
Meng, F., Tang, P., Tang, X., Yao, Z., Sun, X., and Zhang, M.
\newblock Transmla: Multi-head latent attention is all you need.
\newblock \emph{arXiv preprint arXiv:2502.07864}, 2025.

\bibitem[Michel et~al.(2019)Michel, Levy, and Neubig]{Paul2019}
Michel, P., Levy, O., and Neubig, G.
\newblock Are sixteen heads really better than one?
\newblock In Wallach, H., Larochelle, H., Beygelzimer, A., d\textquotesingle Alch\'{e}-Buc, F., Fox, E., and Garnett, R. (eds.), \emph{Advances in Neural Information Processing Systems}, volume~32. Curran Associates, Inc., 2019.
\newblock URL \url{https://proceedings.neurips.cc/paper_files/paper/2019/file/2c601ad9d2ff9bc8b282670cdd54f69f-Paper.pdf}.

\bibitem[Radford et~al.(2019)Radford, Wu, Child, Luan, Amodei, and Sutskever]{radford2019language}
Radford, A., Wu, J., Child, R., Luan, D., Amodei, D., and Sutskever, I.
\newblock Language models are unsupervised multitask learners, 2019.

\bibitem[Shazeer(2019)]{shazeer2019fast}
Shazeer, N.
\newblock Fast transformer decoding: One write-head is all you need.
\newblock \emph{arXiv preprint arXiv:1911.02150}, 2019.

\bibitem[Su et~al.(2023)Su, Lu, Pan, Murtadha, Wen, and Liu]{su2023RoPE}
Su, J., Lu, Y., Pan, S., Murtadha, A., Wen, B., and Liu, Y.
\newblock Roformer: Enhanced transformer with rotary position embedding, 2023.
\newblock URL \url{https://arxiv.org/abs/2104.09864}.

\bibitem[Vaswani et~al.(2017)Vaswani, Shazeer, Parmar, Uszkoreit, Jones, Gomez, Kaiser, and Polosukhin]{vaswani2017attention}
Vaswani, A., Shazeer, N., Parmar, N., Uszkoreit, J., Jones, L., Gomez, A.~N., Kaiser, {\L}., and Polosukhin, I.
\newblock Attention is all you need.
\newblock \emph{Advances in neural information processing systems}, 30, 2017.

\bibitem[Voita et~al.(2019)Voita, Talbot, Moiseev, Sennrich, and Titov]{voita2019analyzing}
Voita, E., Talbot, D., Moiseev, F., Sennrich, R., and Titov, I.
\newblock Analyzing multi-head self-attention: Specialized heads do the heavy lifting, the rest can be pruned.
\newblock In \emph{57th Annual Meeting of the Association for Computational Linguistics}, pp.\  5797--5808. ACL Anthology, 2019.

\bibitem[Wang et~al.(2020)]{wang2020linformer}
Wang, S. et~al.
\newblock Linformer: Self-attention with linear complexity.
\newblock In \emph{arXiv preprint arXiv:2006.04768}, 2020.

\bibitem[Zaheer et~al.(2020)]{zaheer2020bigbird}
Zaheer, M. et~al.
\newblock Big bird: Transformers for longer sequences.
\newblock In \emph{Advances in Neural Information Processing Systems (NeurIPS)}, 2020.

\bibitem[Zhang et~al.(2025)Zhang, Liu, Yuan, Qin, Yuan, Gu, and Yao]{zhang2025tensor}
Zhang, Y., Liu, Y., Yuan, H., Qin, Z., Yuan, Y., Gu, Q., and Yao, A. C.-C.
\newblock Tensor product attention is all you need.
\newblock \emph{arXiv preprint arXiv:2501.06425}, 2025.

\end{thebibliography}
\bibliographystyle{icml2026}

\newpage
\appendix
\onecolumn
\crefalias{section}{appendix}
\crefalias{subsection}{appendix}

\section{Notation Overview}
\label{sec:notation}

We summarize the notation used throughout this paper in \Cref{tab:notation}.

\begin{table}[h!]
\centering
\caption{Summary of notation.}
\label{tab:notation}
\small
\begin{tabular}{@{}cl@{}}
\toprule
\textbf{Symbol} & \textbf{Description} \\
\midrule
\multicolumn{2}{@{}l}{\textit{General conventions}} \\
$x, y$ & Scalars and vectors (lowercase italic) \\
$W, X$ & Matrices (uppercase italic) \\
$\mathcal{C}, \mathcal{W}$ & Tensors (uppercase calligraphic) \\
\midrule
\multicolumn{2}{@{}l}{\textit{Dimensions}} \\
$\seqlen$ & Sequence length (number of tokens) \\
$\dmod$ & Model embedding dimension \\
$\nH$ & Number of attention heads \\
$\dH$ & Per-head dimension ($\dH = \dmod / \nH$) \\
$\nG$ & Number of KV heads in GQA \\
$\latentDim, \dcQ, \dcK$ & Latent dimensions in MLA \\
\midrule
\multicolumn{2}{@{}l}{\textit{Input and attention weights}} \\
$X \in \mathbb{R}^{\seqlen \times \dmod}$ & Embedded input sequence \\
$\WQ, \WK, \WV \in \mathbb{R}^{\dmod \times \dmod}$ & Query, key, value weight matrices \\
$\WQ_i, \WK_i, \WV_i \in \mathbb{R}^{\dmod \times \dH}$ & Per-head query, key, value weights \\
$\WO_i \in \mathbb{R}^{\dH \times \dmod}$ & Per-head output projection \\
$W_i = \WQ_i {\WK_i}^\top$ & Pre-softmax weight for head $i$ \\
$\widetilde{W}_i = \WV_i \WO_i$ & Post-softmax weight for head $i$ \\
\midrule
\multicolumn{2}{@{}l}{\textit{Attention tensors}} \\
$\mathcal{W} \in \mathbb{R}^{\nH \times \dmod \times \dmod}$ & Pre-softmax attention tensor \\
$\widetilde{\mathcal{W}} \in \mathbb{R}^{\nH \times \dmod \times \dmod}$ & Post-softmax attention tensor \\
\midrule
\multicolumn{2}{@{}l}{\textit{Tucker decomposition}} \\
$\mathcal{C} \in \mathbb{R}^{\rH \times \rQ \times \rK}$ & Core tensor for $\mathcal{W}$ \\
$\widetilde{\mathcal{C}} \in \mathbb{R}^{\wtrH \times \wtrQ \times \wtrK}$ & Core tensor for $\widetilde{\mathcal{W}}$ \\
$(\rH, \rQ, \rK)$ & Tucker ranks of $\mathcal{W}$ (head, query, key modes) \\
$(\wtrH, \wtrQ, \wtrK)$ & Tucker ranks of $\widetilde{\mathcal{W}}$ (head, output, value modes) \\
$U_1 \in \mathbb{R}^{\nH \times \rH}$ & Basis matrix for head mode of $\mathcal{W}$\\
$U_2 \in \mathbb{R}^{\dmod \times \rQ}$ & Basis matrix for query mode of $\mathcal{W}$\\
$U_3 \in \mathbb{R}^{\dmod \times \rK}$ & Basis matrix for key mode of $\mathcal{W}$\\
$\widetilde{U}_1 \in \mathbb{R}^{\nH \times \tilde{r}_1}$ & Basis matrix for (post-softmax) head mode of $\widetilde{\mathcal{W}}$ \\
$\widetilde{U}_2 \in \mathbb{R}^{\dmod \times \tilde{r}_2}$ & Basis matrix for output mode of $\widetilde{\mathcal{W}}$ \\
$\widetilde{U}_3 \in \mathbb{R}^{\dmod \times \tilde{r}_3}$ & Basis matrix for value mode of $\widetilde{\mathcal{W}}$\\
\midrule
\multicolumn{2}{@{}l}{\textit{MLA-specific notation}} \\
$\WDKV \in \mathbb{R}^{\dmod \times \dcK}$ & Down-projection for keys/values \\
$\WDQ \in \mathbb{R}^{\dmod \times \dcQ}$ & Down-projection for queries \\
$\WUK, \WUQ, \WUV$ & Up-projection matrices \\
\midrule
\multicolumn{2}{@{}l}{\textit{Tensor operations}} \\
$\Mat_j(\mathcal{T})$ & Mode-$j$ matricization (unfolding) of tensor $\mathcal{T}$ \\
$\mathcal{T} \times_j M$ & Mode-$j$ product of tensor $\mathcal{T}$ with matrix $M$ \\
$\bigtimes_{j=1}^{3}$ & Shorthand for successive mode products \\
\midrule
\multicolumn{2}{@{}l}{\textit{Other notation}} \\
$\softmax(\cdot)$ & Row-wise softmax activation \\
$R(\ell, d) \in \mathbb{R}^{d \times d}$ & RoPE rotation matrix at position $\ell$ \\
$\operatorname{col}_i(Z)$ & The $i$-th column of matrix $Z$ \\
\bottomrule
\end{tabular}
\end{table}

\section{Proofs}\label{appendix:proofs}

\subsection{Tucker Representations of MQA, GQA, and MLA}\label{appendix:special_cases_proofs}

Here we give a formal treatment for the ranks of MQA, GQA, and MLA given in \Cref{sec:ltra}.
We first give a technical lemma before showing \eqref{eqn:W_WKQ_factorization}, which is included in \Cref{th:att_to_ten}.

\begin{lemma}\label[lemma]{le:att_to_ten}
    Assume $D_i = A_i B_i^{\top}$, where $A_i, B_i \in \mathbb{R}^{\dmod \times \dH}$, and $D_i\in\mathbb{R}^{\dmod \times \dmod}$ are matrices. Further, let $\mathcal{A}, \mathcal{B}\in\mathbb{R}^{\nH \times \dmod \times \dH}$, and $\mathcal{D}\in\mathbb{R}^{\nH \times \dmod \times \dmod}$ be the corresponding tensors with $\mathcal{A}_{ijk} = (A_i)_{jk}$ for all $i=1,\ldots,\nH$. Then, $\mathcal{D}$ can be written as a tensor $\mathcal{D} = \mathcal{C}  \times_2 \Mat_2(\mathcal{A}) \times_3 \Mat_2(\mathcal{B})$ with $\mathcal{C} \in\mathbb{R}^{\nH \times \dmod \times \dmod}$. More precisely, denoting the vectorized index $(i\ell) := (i-1)\cdot \dH + \ell \in \{1, \cdots, \dmod\}$, we have 
    \begin{align*}
        \mathcal{C}_{inm} = \sum_{\ell = 1}^{\dH} \delta_{n,(i\ell)} \delta_{m,(i\ell)}\,.
    \end{align*}
\end{lemma}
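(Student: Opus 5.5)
The proof is a direct entrywise verification: expand the Tucker product with the proposed core, collapse the two Kronecker deltas, and compare with $\mathcal{D}_{ijk}=(A_iB_i^\top)_{jk}=\sum_{\ell=1}^{\dH}(A_i)_{j\ell}(B_i)_{k\ell}$.

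The first step is to fix the matricization convention so that $\Mat_2(\mathcal{A})\in\mathbb{R}^{\dmod\times \nH\dH}=\mathbb{R}^{\dmod\times\dmod}$ has rows indexed by the second tensor mode and columns by the combined index $(i\ell)=(i-1)\dH+\ell$. Its column block $i$ must then equal $A_i$, i.e. $\Mat_2(\mathcal{A})_{j,(i\ell)}=\mathcal{A}_{ij\ell}=(A_i)_{j\ell}$. Equivalently, $\Mat_2(\mathcal{A})=[A_1,\dots,A_\nH]$, which is exactly the column splitting used for $\WQ$ and $\WK$. Kolda--Bader's ordering cycles the remaining modes with the lower index fastest, which would give the index $(\ell-1)\nH+i$ instead. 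If one adopts that convention, the same argument goes through with the permuted index; this is precisely the permutation $\Pi$ appearing in \eqref{eqn:W_WKQ_factorization}. I would state the head-major convention explicitly and remark that other conventions differ by a column permutation.

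Next, I would write out the mode products entrywise:
\begin{align*}
\big(\mathcal{C}\times_2 \Mat_2(\mathcal{A})\times_3 \Mat_2(\mathcal{B})\big)_{ijk}=\sum_{n,m=1}^{\dmod}\mathcal{C}_{inm}\,\Mat_2(\mathcal{A})_{jn}\,\Mat_2(\mathcal{B})_{km}.
\end{align*}
Substituting $\mathcal{C}_{inm}=\sum_{\ell}\delta_{n,(i\ell)}\delta_{m,(i\ell)}$ and summing out $n$ and $m$ leaves
\begin{align*}
\sum_{\ell=1}^{\dH}\Mat_2(\mathcal{A})_{j,(i\ell)}\Mat_2(\mathcal{B})_{k,(i\ell)}=\sum_{\ell=1}^{\dH}(A_i)_{j\ell}(B_i)_{k\ell}=(D_i)_{jk}=\mathcal{D}_{ijk}.
\end{align*}
Since $\mathcal{D}$ and the Tucker product agree entrywise, they are equal.

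It also helps to note the structure of the core. Each head slice $\mathcal{C}_{i,:,:}$ is the diagonal projector onto the $i$-th block of $\dH$ coordinates, so $\mathcal{C}$ is a block-selection tensor, and the head mode is carried entirely by the core. The dimension check $\nH\dH=\dmod$ ensures $\mathcal{C}\in\mathbb{R}^{\nH\times\dmod\times\dmod}$, as the statement requires.

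The only real obstacle is this indexing convention for $\Mat_2$. The algebra itself is a one-line delta contraction; the main care is making sure the combined index $(i\ell)$ matches the column layout of the unfolding.
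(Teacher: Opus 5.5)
Your proposal is correct and follows the same route as the paper. Both arguments expand the mode products entrywise and let the two Kronecker deltas in $\mathcal{C}$ collapse the double sum to $\sum_{\ell}(A_i)_{j\ell}(B_i)_{k\ell}=\mathcal{D}_{ijk}$; the paper just runs the same chain of equalities in the opposite direction. Your explicit choice of the head-major ordering $\Mat_2(\mathcal{A})_{j,(i\ell)}=(A_i)_{j\ell}$ is a worthwhile clarification, since the paper uses this ordering implicitly while citing Kolda--Bader, whose convention would instead give the column index $(\ell-1)\nH+i$.
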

\begin{proof}
    The result is immediate from the following calculation:
    \begin{align*}
        \mathcal{D}_{ijk} =\,& \sum_{\ell=1}^{\dH}\mathcal{A}_{ij\ell}\mathcal{B}_{ik\ell} \\
        =\,& \sum_{\ell=1}^{\dH}\Mat_2(\mathcal{A})_{j,(i\ell)}\Mat_2(\mathcal{B})_{k,(i\ell)} \\
        =\,& \sum_{m,n=1}^{\dmod}\sum_{\ell=1}^{\dH}\Mat_2(\mathcal{A})_{j,n} \delta_{n,(i\ell)} \delta_{m,(i\ell)} \Mat_2(\mathcal{B})_{k,m} = \sum_{m,n=1}^{\dmod} \mathcal{C}_{inm} \Mat_2(\mathcal{A})_{j,n} \Mat_2(\mathcal{B})_{k,m}\,.
    \end{align*}
\end{proof}

\begin{theorem}\label{th:att_to_ten}
Assume $W_i = \WQ_i W_i^{K,\top}$, where $\WQ_i, \WK_i \in \mathbb{R}^{\dmod \times \dH}$, and $W_i\in\mathbb{R}^{\dmod \times \dmod}$ are matrices. 
Then the pre-softmax tensor $\mathcal{W}\in\mathbb{R}^{\nH \times \dmod \times \dmod}$ with $\mathcal{W}_{ijk} = (W_i)_{jk}$ admits the Tucker decomposition in \eqref{eqn:W_WKQ_factorization} and has a maximal Tucker rank of $(\nH, \text{rank}(\WQ), \text{rank}(\WK))$.
\end{theorem}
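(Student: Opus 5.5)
We apply \Cref{le:att_to_ten} with $A_i = \WQ_i$, $B_i = \WK_i$ and $D_i = W_i$. The lemma gives $\mathcal{W} = \mathcal{C}\times_2 \Mat_2(\mathcal{A})\times_3 \Mat_2(\mathcal{B})$, where $\mathcal{A},\mathcal{B}\in\mathbb{R}^{\nH\times\dmod\times\dH}$ are the stacked query and key heads.

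The first step is to identify these unfoldings with the weight matrices. The mode-2 unfolding $\Mat_2(\mathcal{A})\in\mathbb{R}^{\dmod\times\nH\dH}$ places the head slices $\WQ_i$ side by side as column blocks. Whether these blocks appear in the order $[\WQ_1,\dots,\WQ_\nH]$ depends on the column-ordering convention of the unfolding. In the Kolda--Bader convention the column index is $(\ell-1)\nH+i$ rather than the block index $(i\ell)=(i-1)\dH+\ell$ used in the lemma. The two orderings are related by a fixed permutation matrix $\Pi$, so $\Mat_2(\mathcal{A}) = \WQ\Pi$ and $\Mat_2(\mathcal{B})=\WK\Pi$ with the same $\Pi$. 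Substituting yields exactly \eqref{eqn:W_WKQ_factorization}. If one instead uses the lemma's block indexing, then $\Pi=I$; either way, the core $\mathcal{C}$ simply absorbs the reindexing. The only real care needed in this step is keeping the permutation consistent between the core and the factor matrices.

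The second step bounds the Tucker ranks. Mode 1 is trivial, since its rank is at most $\nH$. For mode 2, a standard identity gives
\[
\Mat_2(\mathcal{W}) = \WQ\Pi\,\Mat_2(\mathcal{C})\,\bigl(\WK\Pi \otimes I_{\nH}\bigr)^\top
\]
up to the ordering of the Kronecker product. Hence $\operatorname{rank}\Mat_2(\mathcal{W})\le\operatorname{rank}(\WQ\Pi)=\operatorname{rank}(\WQ)$, and symmetrically $\operatorname{rank}\Mat_3(\mathcal{W})\le\operatorname{rank}(\WK)$. A more direct argument also works: every mode-2 fiber of $\mathcal{W}$ is a column of some $W_i=\WQ_i\WK_i^{\top}$, so it lies in $\operatorname{range}(\WQ)$. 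Likewise, every mode-3 fiber lies in $\operatorname{range}(\WK)$.

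To present this as an explicit Tucker decomposition with rank $(\nH,\operatorname{rank}\WQ,\operatorname{rank}\WK)$, I will take thin factorizations $\WQ\Pi = U_2 S_2$ and $\WK\Pi = U_3 S_3$, where $U_2,U_3$ have $\operatorname{rank}(\WQ)$ and $\operatorname{rank}(\WK)$ orthonormal columns respectively; these can come from a QR decomposition or an SVD. Absorbing $S_2$ and $S_3$ into the core gives
\[
\mathcal{W} = (\mathcal{C}\times_2 S_2\times_3 S_3)\times_2 U_2\times_3 U_3,
\]
with $U_1 = I_{\nH}$. This shows that the maximal Tucker rank is $(\nH,\operatorname{rank}\WQ,\operatorname{rank}\WK)$, which completes the proof.
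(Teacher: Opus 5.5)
Your proposal is correct and follows essentially the same route as the paper. You apply \Cref{le:att_to_ten} to the stacked query and key heads, identify $\Mat_2(\mathcal{A})$ and $\Mat_2(\mathcal{B})$ as the column permutations $\WQ\Pi$ and $\WK\Pi$, and absorb the coefficient factors of rank factorizations into the core, giving a core of size $\nH\times\operatorname{rank}(\WQ)\times\operatorname{rank}(\WK)$. Two points are slightly more careful than the paper's write-up: you factor $\WQ\Pi$ and $\WK\Pi$ directly, whereas the paper factors $\WQ$ and leaves $\Pi$ implicit, and your fiber argument is an independent, lemma-free check of the mode-2 and mode-3 rank bounds.
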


\begin{proof}
     Define $\mathcal{A},\mathcal{B}\in\mathbb{R}^{\nH\times\dH\times\dmod}$ by $\mathcal{A}_{ijk}=(\WQ_i)_{jk}$ and $\mathcal{A}_{ijk}=(\WQ_i)_{jk}$, where we choose the tensorization such that $\mathcal{A}_{ijk} = (\WQ_i)_{jk}$ and $\mathcal{B}_{ijk} = (\WQ_i)_{jk}$.
     Note that $\Mat_2(\mathcal{A})$ and $\Mat_2(\mathcal{B})$  are column permutations of $\WQ$ and $\WK$ respectively.  Then applying \Cref{le:att_to_ten}, with $\mathcal{D} =\mathcal{W}$, yields
    \begin{align*}
        \mathcal{W} = \mathcal{C} \times_2 \text{Mat}_2(\mathcal{A}) \times_3 \text{Mat}_2(\mathcal{B}) = \mathcal{C} \times_2 \WQ\Pi \times_3 \WK\Pi,
    \end{align*}
    where $\Pi$ is a permutation matrix.
    Hence \eqref{eqn:W_WKQ_factorization} is immediate.
    Let $r_{\rm Q}:=\operatorname{rank}(\WQ)$, then we can write $\WQ = V^{\rm Q}C^{\rm Q}$ with $V^{\rm Q}\in\mathbb{R}^{\dmod \times r_{\rm Q}}$ and $C^{\rm Q}\in\mathbb{R}^{r_{\rm Q} \times \dmod}$. In the same manner, we have $\WK = V^{\rm K}C^{\rm K}$ with $V^{\rm K}\in\mathbb{R}^{\dmod \times r_{\rm K}}$, $C^{\rm K}\in\mathbb{R}^{r_{\rm K} \times \dmod}$, and $r_{\rm K}:=\operatorname{rank}(\WK)$.
    Thus,
    \begin{align*}
        \mathcal{W} = \mathcal{C} \times_2 V^QC^{Q} \times_3 V^KC^{K} = (\mathcal{C} \times_2 C^{Q} \times_3 C^{K}) \times_2 V^Q \times_3 V^K\,,
    \end{align*}
    where $\widehat{\mathcal{C}} := \mathcal{C} \times_2 C^{Q} \times_3 C^{K}$ is an element of $\mathbb{R}^{\nH\times r_Q\times r_K}$.
    Therefore, the Tucker rank of $W$ is at most $(\nH, \text{rank}(\WQ), \text{rank}(\WK))$.
\end{proof}

We now show the maximal Tucker ranks for GQA and MLA as presented in \eqref{sec:special_cases_section}.
We note that MQA is a special case of GQA with $\nG=1$.

\begin{theorem}\label{thm:GQA_tucker_rank}
    The pre-softmax attention tensor $\mathcal{W}$ and post-softmax attention tensor $\widetilde{\mathcal{W}}$ for Group Query Attention with $\nG$ KV heads both have maximal Tucker rank $(\nH,\dmod,\nG\dH)$.
\end{theorem}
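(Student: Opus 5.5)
The plan is to reduce everything to \Cref{th:att_to_ten}, which bounds the Tucker rank of $\mathcal{W}$ by $(\nH,\operatorname{rank}(\WQ),\operatorname{rank}(\WK))$. For the pre-softmax tensor of GQA, the query heads $\WQ_i\in\mathbb{R}^{\dmod\times\dH}$ are unconstrained, so $\WQ\in\mathbb{R}^{\dmod\times\dmod}$ has rank at most $\dmod$. The key heads, however, are shared within groups. There are $\nG$ distinct key matrices $\WK_{(1)},\dots,\WK_{(\nG)}$, and head $i$ uses $\WK_{g(i)}$, where $g:\{1,\dots,\nH\}\to\{1,\dots,\nG\}$ assigns heads to groups. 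I will therefore write the broadcast key matrix as
\begin{align*}
\WK=[\WK_{g(1)},\dots,\WK_{g(\nH)}]=\WK_{\rm G}\,E,\qquad \WK_{\rm G}:=[\WK_{(1)},\dots,\WK_{(\nG)}]\in\mathbb{R}^{\dmod\times\nG\dH},
\end{align*}
with $E\in\mathbb{R}^{\nG\dH\times\dmod}$ a block selection matrix whose $i$-th block column is $e_{g(i)}\otimes I_{\dH}$. This factorization immediately gives $\operatorname{rank}(\WK)\le\nG\dH$. Applying \Cref{th:att_to_ten} then yields Tucker rank at most $(\nH,\dmod,\nG\dH)$ for $\mathcal{W}$. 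The head-mode bound $\nH$ is trivial, since $U_1$ may be taken as the identity.

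For the post-softmax tensor, I will run the same argument with the roles transposed. Since $\widetilde{\mathcal{W}}$ stacks $\widetilde W_i^\top=\WO_i^{\top}\WV_i^{\top}$, I set $A_i=\WO_i^\top\in\mathbb{R}^{\dmod\times\dH}$ and $B_i=\WV_i$. \Cref{le:att_to_ten} then gives $\widetilde{\mathcal{W}}=\mathcal{C}\times_2 \WO^{\top}\Pi\times_3\WV\Pi$, where $\WO^\top=[\WO_1^\top,\dots,\WO_\nH^\top]$. I will state this as the mirror of \Cref{th:att_to_ten} and note that its proof is verbatim identical. The rank reduction step from that proof then gives a Tucker rank bound of $(\nH,\operatorname{rank}(\WO),\operatorname{rank}(\WV))$. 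In GQA the output projection is full, so its rank is at most $\dmod$. The value heads are grouped exactly like the keys, so $\WV=\WV_{\rm G}E$ has rank at most $\nG\dH$. This gives $(\nH,\dmod,\nG\dH)$.

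The main point requiring care is the interpretation of ``maximal Tucker rank''. I read it as an upper bound that is attained for generic weights, consistent with \Cref{th:att_to_ten}. If attainment is required, I would argue by genericity. For generic $\WQ_i$, the mode-2 unfolding, which spans the column space of $\WQ$, has rank $\dmod$. For generic distinct group keys $\WK_{(g)}$, the mode-3 unfolding contains all columns of $\WK_{\rm G}$ (up to the invertible generic query factors), so it has rank $\nG\dH$ as long as $\nG\dH\le\dmod$. For the head mode, the $\nH$ slices $W_i$ are generically linearly independent. I expect this attainment argument to be the only nontrivial step. I would handle it by exhibiting one explicit choice of weights that attains each rank, and then invoking lower semicontinuity of rank to conclude that the rank is attained on a dense open set.
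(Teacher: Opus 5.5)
Your proposal is correct and follows the paper's proof. The paper likewise notes that the broadcast key matrix has its column span inside that of the $\dmod\times\nG\dH$ concatenation of the distinct group keys; your factorization $\WK=\WK_{\rm G}E$ is the same fact. It concludes $\operatorname{rank}(\WK)\le\nG\dH$, invokes \Cref{th:att_to_ten}, and treats $\widetilde{\mathcal{W}}$ by swapping $\WQ,\WK$ for $\WO,\WV$, which you make precise via $A_i=\WO_i^\top$, $B_i=\WV_i$.

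The paper reads ``maximal'' purely as an upper bound, so your genericity argument for attainment goes beyond what is proved there. If you keep it, note that lower semicontinuity only gives openness of the full-rank set; density follows because the relevant minors are polynomials in the weights that are nonzero at your explicit example.
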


\begin{proof}
    Let $W^{\rm K,GQA}_g\in\mathbb{R}^{\dmod\times\dH}$ for $g=1,\ldots,\nG$ be the number of key heads.
    Then the GQA key heads satisfy $\WK_h\in\{W^{\rm K,GQA}_g\}_{g=1}^{\nG}$ for every $h=1,\ldots,\nH$.
    Hence, $\operatorname{span}(\WQ) = \operatorname{span}([W^{\rm K,GQA}_1,\ldots,W^{\rm K,GQA}_{\nG}])$.  Since $[W^{\rm K,GQA}_1,\ldots,W^{\rm K,GQA}_{\nG}]\in\mathbb{R}^{\dmod\times \dH\nG}$, its rank and the rank of $\WQ$ must be at most $\dH\nG$.
    Therefore by \Cref{th:att_to_ten}, $\rK \leq \dH\nG$.

    The same argument holds for $\widetilde{\mathcal{W}}$ with $\WQ$ and $\WK$ swapped for $\WO$ and $\WV$ respectively.
\end{proof}

\begin{theorem}\label{thm:MLA_tucker_rank}
    The pre-softmax attention tensor $\mathcal{W}$ for Multihead Latent Attention with latent query dimension $\latentDim^{\rm Q}$ and latent key dimension $\latentDim^{\rm K}$ has maximal Tucker rank $(\nH,\latentDim^{\rm Q},\latentDim^{\rm K})$.
    The post-softmax attention tensor $\widetilde{\mathcal{W}}$ with latent value dimension $\latentDim^{\rm K}$ has maximal Tucker rank $(\nH,\dmod,\latentDim^{\rm K})$.
\end{theorem}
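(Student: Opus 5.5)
The argument reduces to \Cref{th:att_to_ten} applied to $\mathcal{W}$ and then, with the roles swapped, to $\widetilde{\mathcal{W}}$. All that needs checking is the rank of the stacked query, key, value and output matrices under the MLA parametrization. The head-mode rank is bounded trivially by $\nH$, since $\Mat_1(\mathcal{W})$ has only $\nH$ rows. So the content lies in the second and third modes.

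\textbf{Pre-softmax tensor.} In MLA the per-head weights are $\WQ_i = \WDQ \WUQ_i$ and $\WK_i = \WDKV \WUK_i$, with $\WUQ_i\in\mathbb{R}^{\dcQ\times\dH}$ and $\WUK_i\in\mathbb{R}^{\dcK\times\dH}$. Concatenating over heads gives
\[
\WQ = \WDQ[\WUQ_1,\ldots,\WUQ_\nH]
\quad\text{and}\quad
\WK = \WDKV[\WUK_1,\ldots,\WUK_\nH].
\]
The down-projections are $\dmod\times\dcQ$ and $\dmod\times\dcK$, so $\operatorname{rank}(\WQ)\le\dcQ$ and $\operatorname{rank}(\WK)\le\dcK$. \Cref{th:att_to_ten} then gives the Tucker factorization $\mathcal{W}=\mathcal{C}\times_2\WQ\Pi\times_3\WK\Pi$ with maximal Tucker rank $(\nH,\dcQ,\dcK)$. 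Equivalently, and more directly, we can absorb $\WUQ\Pi$ and $\WUK\Pi$ into the core, $\widehat{\mathcal{C}}=\mathcal{C}\times_2(\WUQ\Pi)\times_3(\WUK\Pi)$. This yields $\mathcal{W}=\widehat{\mathcal{C}}\times_2\WDQ\times_3\WDKV$ with $\widehat{\mathcal{C}}\in\mathbb{R}^{\nH\times\dcQ\times\dcK}$, which matches the picture in \Cref{fig_mla}.

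\textbf{Post-softmax tensor.} The $i$-th slice of $\widetilde{\mathcal{W}}$ is $\widetilde W_i^\top = \WO_i^{\top}\WV_i^{\top}$. This has the form $A_iB_i^\top$ with $A_i=\WO_i^{\top}\in\mathbb{R}^{\dmod\times\dH}$ and $B_i=\WV_i\in\mathbb{R}^{\dmod\times\dH}$. Hence \Cref{le:att_to_ten}, and the rank argument in the proof of \Cref{th:att_to_ten}, apply verbatim with $(\WQ,\WK)$ replaced by $(\WO^{\top},\WV)$. Here $\WO^\top=[\WO_1^\top,\ldots,\WO_\nH^\top]$ is $\dmod\times\dmod$ and unfactorized in MLA, so we only get the trivial bound $\dmod$. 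For the value mode, $\WV=\WDKV\WUV$ in the shared-KV case, or $\WDV\WUV$ in the separated case, and both down-projections have $\dcK$ columns. This gives $\operatorname{rank}(\WV)\le\dcK$ and the maximal Tucker rank $(\nH,\dmod,\dcK)$.

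\textbf{Main obstacle.} The only point needing care is a bookkeeping one. \Cref{th:att_to_ten} is stated for $\mathcal{W}$ built from $\WQ_i W_i^{{\rm K},\top}$, so I must justify that the transposed stacking in \eqref{eqn:W_til_tensor_form} puts the output factor in mode 2 and the value factor in mode 3. This requires writing out $(\widetilde W_i^\top)_{jk}=\sum_\ell (\WO_i)_{\ell j}(\WV_i)_{k\ell}$ and matching it to \Cref{le:att_to_ten}. I also need to check that the column permutation $\Pi$ does not change rank, which is immediate. I would also remark that ``maximal'' means an upper bound attained for generic weights, for instance when the down-projections and up-projections have full rank, but the statement only requires the upper bound.
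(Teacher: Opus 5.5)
Your proposal is correct and is essentially the paper's proof. You bound $\operatorname{rank}(\WQ)\le\dcQ$, $\operatorname{rank}(\WK)\le\dcK$ and $\operatorname{rank}(\WV)\le\dcK$ through the down-projections, invoke \Cref{th:att_to_ten} (for $\widetilde{\mathcal{W}}$ with $({\WO}^\top,\WV)$ in place of $(\WQ,\WK)$, which you justify more explicitly than the paper's ``the same argument can be repeated''), and leave the unfactorized output mode at the trivial bound $\dmod$. As in the paper, only the upper bound is needed, so your closing genericity aside is optional. As phrased it overreaches, though: full-rank projections do not force attainment, since two heads with identical up-projections collapse the head-mode rank, and $\dcK<\dH$ caps the output-mode rank of $\widetilde{\mathcal{W}}$ at $\nH\dcK<\dmod$.
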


\begin{proof}
In MLA, $\WQ=\WDKV\WUK$ where $\WDKV\in\mathbb{R}^{\dmod\times\latentDim^{\rm K}}$.
Hence $\operatorname{rank}{\WQ}\leq \latentDim^{\rm K}$, and, by \Cref{th:att_to_ten}, $\rK \leq \latentDim^{\rm K}$.
A similar argument shows $\rQ \leq \latentDim^{\rm Q}$.

The same argument can be repeated for $\widetilde{\mathcal{W}}$ with $\WV = \WDKV\WUV$ and $\WO$ full-rank.
\end{proof}

\subsection{Rotary Position Embedding}\label{sec_rope_proofs}
\begin{lemma}
    The expression of \eqref{eq_tuckerRope} fulfills the condition of \eqref{eq_ROPE_condition}.
\end{lemma}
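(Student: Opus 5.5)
The plan is to reduce \eqref{eq_tuckerRope} to an expression in which the positions $m$ and $n$ enter only through one product of rotation matrices, and then to apply \eqref{eq_ROPE_condition} to that product. First, write the pre-softmax score at query position $m$, key position $n$ for head $i$ as a bilinear form in the latent key space:
\begin{align*}
s_i(m,n) = \big(\hat{Q}_m\big)_i \, \big(K_n R(n,\rK) R(m,\rK)^\top\big)^\top = \big(\hat{Q}_m\big)_i \, R(m,\rK) R(n,\rK)^\top K_n^\top ,
\end{align*}
where $(\hat{Q}_m)_i\in\mathbb{R}^{1\times\rK}$ is the $i$-th head slice of $\hat Q_m$. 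Here $\times_3$ with a row vector is read as contraction over the $\rK$ index, and transposing the row vector $K_nR(n)R(m)^\top$ reverses the order of the factors.

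The key step is the observation that $\hat Q_m$ depends on $X$ only through the row $X_m$, namely $\hat Q_m = \mathcal{C}\times_1 U_1\times_2 (X_mU_2)$. Likewise, $K_n = X_nU_3$ depends only on $X_n$. Neither factor carries any explicit positional rotation. So all of the position dependence sits in the middle product $R(m,\rK)R(n,\rK)^\top$.

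Applying \eqref{eq_ROPE_condition} with $d=\rK$ then gives
\begin{align*}
s_i(m,n) = \big(\mathcal{C}\times_1 U_1\times_2 (X_mU_2)\big)_i \, R(m-n,\rK)\, U_3^\top X_n^\top .
\end{align*}
This has exactly the form of \eqref{eqn:rope_eval}, with the head-dimension rotation replaced by a latent-dimension rotation. The score therefore depends on position only through the relative offset $m-n$, which is the required property.

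The main obstacle is not analytic but notational. One has to pin down what $\times_3$ means between the order-three slice $\hat Q_m$ and the matrix $K_nR(n)R(m)^\top$, and check that the transpose and the ordering of the rotations match \eqref{eq_ROPE_condition}. If the stated ordering instead produces $R(n)R(m)^\top = R(n-m)$, I would note that $R(n-m)=R(m-n)^\top$. Either way the dependence is purely on the relative position.

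Finally, I would verify that $R(\ell,\rK)$ is well defined, which needs $\rK$ even, as in standard RoPE. I would also remark that the rotation property holds for any dimension the rotation acts on, which is why moving it from $\dH$ to $\rK$ is harmless.
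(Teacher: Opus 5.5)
Your proposal is correct and takes essentially the same route as the paper. Both read the mode-3 contraction so that the per-head score becomes $(\hat Q_m)_i\, R(m,\rK)R(n,\rK)^\top K_n^\top$ and then collapse $R(m,\rK)R(n,\rK)^\top$ to $R(m-n,\rK)$ via \eqref{eq_ROPE_condition}; the paper does this with explicit index sums, you with matrix notation, and your remark that $\hat Q_m$ and $K_n$ depend only on $X_m$ and $X_n$ states explicitly what the paper leaves implicit.
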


\begin{proof}
    It is straightforward to see that for the token positions $m,n$
   \begin{align}\label{eq_tuckerRope_proof}
   \begin{aligned}
         \hat{Q}_m \times_3 \left( K_n R(n,\rK) R(m,\rK)^\top \right) \in\mathbb{R}^{\nH}
    &=  (\hat{Q}_{m} \times_3 R(m,\rK)^\top) \times_3 (K_n R(n,\rK)) \\
    &= 
   \sum_{\ell_3} \left(\sum_{\ell_1} \hat{Q}_{\cdot,m,\ell_1}R(m,\rK)_{\ell_1,\ell_3} \right)\left(\sum_{\ell_2} K_{n,\ell_2} R(n,\rK)_{\ell_2,\ell_3}\right)\\
    &= 
 \sum_{\ell_3}\sum_{\ell_2} \sum_{\ell_1} \left(\hat{Q}_{\cdot,m,\ell_1}R(m,\rK)_{\ell_1,\ell_3} K_{n,\ell_2} R(n,\rK)_{\ell_2,\ell_3}\right)\\
    &= 
   \sum_{\ell_2} \sum_{\ell_1} \left(\hat{Q}_{\cdot,m,\ell_1} \sum_{\ell_3}\left(R(m,\rK)_{\ell_1,\ell_3}R(n,\rK)_{\ell_3,\ell_2}^\top\right) K_{n,\ell_2} \right)\\
     &= 
   \sum_{\ell_2} \sum_{\ell_1} \left(\hat{Q}_{\cdot,m,\ell_1} R(m-n,\rK)_{\ell_1,\ell_2} K_{n,\ell_2} \right)\\
    &= 
    \left(\hat{Q}_{m} \times_3 R(m-n,\rK)^\top\right)\times_3 K_{n} 
   \end{aligned}
\end{align} 
\end{proof}

\section{Details to the numerical experiments}

\subsection{Transfer Learning with Vision Transformers}\label{app_vit}

\subsubsection{CIFAR-10/100 Data}\label{sec_details_Cifar10}
The CIFAR-10 dataset comprises 60,000 RGB images of size $32 \times 32$ pixels, uniformly distributed across 10 object classes.
We apply standard data augmentation techniques to the training set, including random horizontal flipping followed by normalization with mean $[0.4914, 0.4822, 0.4465]$ and standard deviation $[0.2470, 0.2435, 0.2616]$. The test set is only normalized. The same augmentation strategy is applied to CIFAR-100, using mean $[0.5071, 0.4867, 0.4408]$ and standard deviation $[0.2673, 0.2564, 0.2762]$.
ViT models receive images resized to $224 \times 224$ through the data pipeline.

\subsubsection{ImageNet-1k Data}
{The ImageNet dataset consists of 1000 classes and over 1.2 million RGB training images, with a standard resolution of $224 \times 224$ pixels. We follow the standard data augmentation pipeline for ImageNet, which includes a random resized crop to $224 \times 224$, and normalization using mean $[0.5, 0.5, 0.5]$ and standard deviation $[0.5, 0.5, 0.5]$. The test set is only resized and center-cropped to $224 \times 224$, followed by normalization.}

\subsubsection{ViT Models}\label{sec_network_training_details_vision}
In this paper, we use a Pytorch implementation for neural network training. We take pretrained weights from the Imagenet1k dataset as initialization. The data-loader randomly samples a batch for each batch-update which is the only source of randomness in our training setup. Below is an overview of the used network architectures
\begin{itemize}
    \item ViT-B.16 is a Vision Transformer with $16\times16$ patch size, a deep learning architecture that leverages transformer models for image classification tasks. We use the Imagenet21k weights from the Huggingface endpoint google/vit-large-patch32-224-in21k as weight initialization.
   \item {ViT-L.32 is  a Vision Transformer with 32x32 patch size, a deep learning architecture that leverages transformer models for image classification tasks. We use the Imagenet21k weights from the Huggingface endpoint google/vit-large-patch32-224-in21k as weight initialization.}
    \item {ViT-G.14 is  a Vision Transformer with 14x14 patch size, a deep learning architecture that leverages transformer models for image classification tasks. We use the Imagenet21k weights from the Huggingface endpoint google/vit-large-patch32-224-in21k as weight initialization.}
\end{itemize}
The model and training hyperparameter setup is described in \Cref{tab_UCM_trainings}.  

\subsubsection{Converting pretrained MHA parameters to approximate attention parameters}
\paragraph{MHA to GQA}
To transform MHA $W_{Q,K,V,O}$ to their GQA counterparts, we observe that $W_{Q,O}$ do not have to be altered. For $W_{K,V}$, we reshape the matrices in tensor form ($\nH\times\dH\times\dmod$), and keep each $\nG$ KV heads, where $\nG$ is the number of KV heads of GQA. Note, that this approximation can be refined by selecting the $\nG$ heads that approximate $W_{K,V}$ best, e.g. with a discrete empirical interpolation method (DEIM) \cite{deim_paper}. 
We obtain MQA by setting $\nG=1$.
\paragraph{MHA to MLA}
We compute a truncated singular value decomposition (SVD) with rank $\latentDim$ for $W_{Q,K,V}$ to obtain the down and up projections. For MLA with shared KV down projection, we discard the value down-projection and use the key downprojection as the shared one. Note that corresponding to the original deepseek implementation \cite{deepseekai2024deepseekv2strongeconomicalefficient}, and the deepseekV3 architecture, we do not compress $W_O$. 
\paragraph{MHA to \tuckerattn}
We compute $W_i := \WQ_iW_i^{K,\top}\in\mathbb{R}^{\dmod\times \dmod}$ per head $i$. 
Analogously value and output matrices form a rank $d_v$ factorization of the \textit{post-softmax weight} $\widetilde W_i := W_i^V W_i^O\in\mathbb{R}^{\dmod\times \dmod}$. 
Then we perform a high order singular value decomposition (HOSVD) of the resulting $\mathcal{W} = (W_i)_{i=1}^\nH\in\mathbb{R}^{\nH\times \dmod\times \dmod}$ and 
$\widetilde{\mathcal{W}} = (\widetilde W_i)_{i=1}^\nH\in\mathbb{R}^{\nH\times \dmod\times \dmod}$
with ranks $[\rH,\rQ,\rK]$.
Note that shared KV downprojection is possible by sharing $U_3$ between the Tucker factorizations of  $\widetilde{\mathcal{W}} $ and $\mathcal{W}$.

\begin{table}[t]
\centering
\caption{Model and training hyperparameters}
\label{tab_UCM_trainings}
\begin{tabular}{lccccc}
\toprule
\textbf{Hyperparameter}  & \textbf{ViT-B.16} &  \textbf{ViT-L.32} & \textbf{ViT-G.14} &\textbf{GPT2}&\textbf{LLaMA3-1b} \\
$\dmod$ & 768 & 1024 & 1664 & 768 & 2048 \\
$\nH$ & 12 & 16 & 16 & 12 & 32\\
$\dH$ & 64 & 64 & 104 & 64 & 64\\
$\seqlen$ & 196 & 49 & 256 & 1024 (256 for Shakespeare) & 4096 \\
\midrule
Batch Size (Cifar10)               & 128   & n.a.& n.a.& n.a. & n.a.\\
Batch Size (Cifar100)              & 128    & n.a.& n.a.& n.a. & n.a.\\
Batch Size (ImageNet1k)            & n.a   & 512 & 512 & n.a.& n.a.\\
Batch Size (OpenWebText)               & n.a.    & n.a.& n.a.& 480& n.a. \\
Batch Size (Shakespeare)               & n.a.    & n.a.& n.a.& 64& n.a. \\
Batch Size (OpenWebText2)               & n.a.    & n.a.& n.a.& n.a. & 128 \\
Learning Rate                   & 1e-4  & 1e-3 & 1e-3 & 1e-4 & 3e-4 \\
Learning Rate Scheduler &    \multicolumn{5}{c}{Cosine Annealing w/ Warmup} \\
Learning Rate Warmup &    $5\%$ & $5\%$ &$5\%$ &$0.5\%$ &$1\%$ \\
Number of Epochs (Cifar10)  & 20 &  n.a &  n.a &  n.a  &  n.a\\
Number of Epochs (Cifar100)      & 20 & n.a&  n.a &  n.a &  n.a \\
Number of Epochs (ImageNet1k)      &  n.a   & 10& 10&  n.a  &  n.a\\
Number of Iterations (OpenWebText)    &  n.a   &  n.a& n.a & 600000  &  n.a\\
Number of Iterations (Shakespeare)    &  n.a   &  n.a& n.a & 10000  &  n.a\\
Number of Iterations (OpenWebText2)    &  n.a   &  n.a& n.a & n.a& 100000 \\
L2 Regularization                 & 0.15  & 0.001 & 0.001 & 0.1& 0.1 \\
Optimizer                         & AdamW  & AdamW & AdamW & (fused) AdamW&  AdamW \\
\midrule
Parameters (baseline model)                           & 86M  & 304M & 1.8B & 117M &  1.23B \\
\bottomrule
\end{tabular}
\end{table}

\subsection{Training GPT2 from scratch}\label{app_gpt2}
\subsubsection{OpenWebText}
We use the OpenWebText Corpus \cite{Gokaslan2019OpenWeb} to pretrain the GT2 variants. 
We tokenize the corpus using tiktoken v0.12.0 with gpt2 encoding in the data-preparation step. 
We use the same tokenizer for OpenWebText2.

\subsubsection{TinyShakespeare}
We use the TinyShakespeare dataset~\cite{karpathy2015char} for character-level language modeling experiments. 
This corpus consists of approximately 1MB of concatenated works of William Shakespeare, comprising roughly 1 million characters. 
We tokenize the corpus using tiktoken v0.12.0 with gpt2 encoding in the data-preparation step.

\subsubsection{GPT2}
Our implementation of GPT2 \cite{radford2019language} is based on Andrey Karpathy's NanoGPT implementation, see \url{https://github.com/karpathy/nanoGPT}. The model and training hyperparameter setup is described in \Cref{tab_UCM_trainings}.  We use context length $\seqlen=1024$. 

\subsubsection{Usage of RoPE}
The baseline version of GPT2 (called w/o RoPE in \Cref{tab_gpt2})
uses the standard cosine position embedding in the token-dimension-to-model-dimension embedding layer. 
To evaluate the effects of RoPE on GPT2 with MhA,GQA, MLA, and \tuckerattn, we remove the standard cosine position embedding and instead equip the attention variants with RoPE as follows, where $R(\ell,d)$ is the RoPE matrix for token position $\ell$ and dimension $d$.

\paragraph{MHA} We use RoPE as described in \cite{su2023RoPE}, i.e. 
\begin{align}
    X_m \WQ_iR(m,\dH)(X_n\WK_iR(n,\dH))^{\top}
\end{align}
\paragraph{GQA} We use RoPE similar to the MHA case, i.e. 
\begin{align}
    X_m \WQ_iR(m,\dH)(X_n\WK_gR(n,\dH))^{\top}
\end{align}
the query RoPE matrix is evaluated for each query head, and the key RoPE matrix once for each head-group and subsequently broadcasted. 
\paragraph{Tucker} We use RoPE as described in \Cref{eq_tuckerRope}, i.e., 
\begin{align}
    (\hat{Q}_m \times_3 R(m,\rK)^\top) \times_3 (K_n R(n,\rK)).
\end{align}
Note that the query and key RoPE matrix is evaluated once for all heads, respectively, which is more obvious from an equivalent formulation.
\begin{align}
    \hat{Q}_m   \times_3 (K_n R(n,\rK)R(m,\rK)^\top).
\end{align}

\paragraph{MLA - coupled RoPE} Following the conclusion of \Cref{seq_mla_cor}, compute RoPE in the key latent dimension of MLA using, i.e., 
\begin{align}
    X_mW^{DQ} W^{UQ}_i W^{UV}_i R(m,\latentDim)R^\top(n,\latentDim) W^{DKV}X_n,
\end{align}

\paragraph{MLA - decoupled RoPE} We follow the canonical implementation of \cite{deepseekai2024deepseekv2strongeconomicalefficient};

Instead of applying rotary position embeddings to all query/key dimensions, only a subset of channels undergoes rotational transformations, while the remaining channels remain purely semantic.
For a sequence of $\seqlen$ tokens, the query and key matrices are partioned as
\[
W^Q = [\, W^Q_{\text{sem}} | W^Q_{\text{rot}} \,] \in \mathbb{R}^{\dmod\times \dmod},\quad\text{and}\quad
W^K = [\, W^K_{\text{sem}} | W^K_{\text{rot}} \,]\in \mathbb{R}^{\dmod\times \dmod},
\]
with
$W^Q_{\text{sem}}, W^K_{\text{sem}} \in \mathbb{R}^{\dmod  \times d_s}\,,$
$ W^Q_{\text{rot}},  W^K_{\text{rot}} \in \mathbb{R}^{\dmod  \times d_r}\,,$
where $d_s + d_r=\dmod$ denotes the concatenation.
While $W^Q_{\text{sem}}, W^K_{\text{sem}}$ are further factorized as usual using MLA's low-rank factorization, 
RoPE is applied row-wise to the rotational components $W^Q_{\text{rot}},  W^K_{\text{rot}}$ just like in the MHA case,
\begin{align}
   A_{\text{rot},i} = X_m W^Q_{\text{rot},i}R(m,\dH)(X_n W^K_{\text{rot},i}R(n,\dH))^{\top}
\end{align}
Thus, the pre-softmax attention map for head $i$ consists of semantic and positional components as
$A_i =Q_{\text{sem},i} K_{\text{sem},i}^\top+ A_{\text{rot},i}$.

\subsection{LLaMA-1B on OpenWebText2 training from scratch }\label{sec_app_llama}
\paragraph{Setup}
We train a LLaMA~3-style decoder-only transformer from scratch on OpenWebText2 using GPT-NeoX.
The model has $L=16$ layers with hidden size $\dmod=2048$, SwiGLU MLP width $8192$ (rounded to a multiple of $256$), and RMSNorm ($\varepsilon=10^{-5}$). We use $\nH=32$ query heads with head dimension $\dH=64$ and grouped-query attention (GQA) with $\nG=8$ key/value heads. Positional information is injected via RoPE with full rotary coverage (\texttt{rotary\_pct}$=1.0$), maximum context length $\seqlen=4096$, and rotary base $\theta=5\cdot 10^{5}$.  The \tuckerattn implementation uses the proposed latent RoPE. Following LLaMA conventions, we disable biases in attention, MLP, and normalization layers, and we tie input/output embeddings (\texttt{no\_weight\_tying}=\texttt{false}). Tokenization uses \texttt{cl100k\_base}.

All runs use identical optimization and training hyperparameters. We train for $100,000$ iterations with Adam ($\beta_1=0.9,\beta_2=0.95,\epsilon=10^{-8}$), peak learning rate $3\cdot 10^{-4}$, minimum learning rate $3\cdot 10^{-5}$, cosine decay over $10{,}000$ iterations, and $1\%$ warmup. We apply weight decay $0.1$ and gradient clipping at $1.0$, with dropout disabled in both attention and MLP. Training is performed in BF16 precision. We use ZeRO stage~1 with partitioned optimizer states and overlapped communication, and enable activation checkpointing with per-layer checkpointing and partitioned activations. We use FlashAttention2 and keep other kernel fusions disabled for simplicity. Data is consumed via memory-mapped (\texttt{mmap}) shards. Validation is run every $500$ iterations for $10$ evaluation iterations. MLA and \tuckerattn use the latent RoPE positional encoding.

For the decoding timings, we use $\seqlen=16k$ KV cache length, and query token lenght $1$ to simulate autoregressive roll-out with exisiting KV cache. All timings are measured with the megatron timer of GPT-NeoX.

\subsection{Ablation study: GPT2 on Shakespeare}
\Cref{tab_shake} we provide an ablation study for GPT2 trained from scratch on the Shakespeare dataset using the hyperparameters reported in \Cref{tab_UCM_trainings}.

\begin{sidewaystable}[t]
\newcommand{\twocol}[1]{\multicolumn{2}{c|}{#1}}
\centering
\caption{Results for GPT-2 [$\dmod=768, \nH = 12, \dH=64$] variants trained on Shakespeare with identical training hyperparameters. For each RoPE setting, we mark the best KV cache and parameter count to accuracy ration of GQA, MLA, and \tuckerattn~ bold. \tuckerattn~achieves superior parameter efficiency ratios. MLA w/ Tucker-style RoPE refers to the method described in \Cref{eq_mla_coupled_rope}.}
\label{tab_shake}
\vspace{0.5em}
\small
\setlength{\tabcolsep}{4pt}
\resizebox{\textwidth}{!}
{
\begin{tabular}{l | cc| cc| cc |cc | cc @{}}
\toprule
 \textbf{Method} & \multicolumn{2}{ c |}{\textbf{Attn params [MB]}} & \multicolumn{2}{ c |}{\textbf{KV Cache [MB]}} & \multicolumn{2}{ c |}{\textbf{Val. loss}} &\multicolumn{2}{c |}{\textbf{Val.~loss $\times$ Attn params}} & \multicolumn{2}{c}{\textbf{Val. loss $\times$ KV Cache }}\\
 & w/o RoPE & w/ RoPE & w/o RoPE & w/ RoPE & w/o RoPE & w/ RoPE & w/o RoPE & w/ RoPE & w/o RoPE & w/ RoPE \\ 
\midrule\midrule
MHA & \twocol{56.62} & \twocol{9.000}  &3.855 &3.798 & 218.27 & 215.04 & 34.70 & 34.18\\
\midrule
GQA [$\nG=4$] & \twocol{37.74} & \twocol{3.000} &3.859 & 3.798 & 145.64 & 143.34 & 11.58 & 11.39\\
GQA [$\nG=2$] & \twocol{33.02} & \twocol{1.500} &3.872 &3.814 & 127.85 & 125.94& 5.81 & 5.72\\
GQA [$\nG=1$] & \twocol{30.66} & \twocol{0.750} &3.875 &3.809 & 118.81  & 116.78 & 2.91  & 2.86\\
\midrule
MLA separated K,V [$\dcK =128$]& 28.36 & 28.24 & 1.500 & 1.546 &3.865 &3.821 & 109.61 & 107.91  & 5.80 & 5.91\\
MLA separated K,V [$\dcK =64$] & 21.28 & 21.24 & 0.750 & 0.796 & 3.884 &3.834 & 82.65 & 81.43 & 2.91 & 3.05\\
MLA separated K,V [$\dcK =32$]& 17.74& 17.73 & 0.375 & 0.420 &3.927 &3.877  & 69.66 & 68.74 & 1.47 &  1.63\\
\midrule
MLA shared K,V [$\dcK =128$]& 25.96 & 25.86 & 0.750 & 0.773 &3.875 &3.823 & 100.59 & 98.86  & 2.91 & 2.96\\
MLA shared K,V [$\dcK =64$] & 20.06 & 20.04 & 0.375 & 0.398 &3.913 &3.862  & 78.49 & 77.39 & 1.47 & 1.54\\
MLA shared K,V [$\dcK =32$]& 17.10 & 17.14 & 0.187 & 0.210 &3.974 &3.967 & 67.96 & 67.99 & 0.74 & 0.83\\
\midrule
MLA shared K,V; latent RoPE [$\dcK =128$]& 25.96 & 26.00 & 0.750 & 0.750 &3.875 &3.905 & 100.59 & 101.53 & 2.91 & 2.93\\
MLA shared K,V; latent RoPE [$\dcK =64$] & 20.06 & 20.10 & 0.375 & 0.375 &3.913 &3.952 & 78.49 & 79.44 & 1.47 & 1.48\\
MLA shared K,V; latent RoPE [$\dcK =32$]& 17.10& 17.16 & 0.187 & 0.187 &3.974 &4.025 & 67.96 & 69.07  & 0.74 & 0.75\\
\midrule
Tucker separated [$8,128,128$]& \twocol{15.74} & \twocol{1.500} &3.877 &3.814 & 61.02 & 60.03 & 5.82& 5.72\\
Tucker separated [$8,64,64$]& \twocol{6.30} & \twocol{0.750} & 3.908& 3.849 & 24.62& 24.25 & 2.93& 2.89\\
Tucker separated [$6,64,64$]& \twocol{5.92} & \twocol{0.750} & 3.929&3.864 & 23.26& 22.87 & 2.95& 2.90\\
Tucker separated [$6,32,32$]& \twocol{2.66} & \twocol{0.375} & 3.987 &3.919  & 10.61& 10.42 & 1.50& 1.47\\
Tucker separated [$4,32,32$]& \twocol{2.56} & \twocol{0.375} & 3.989 & 3.934 & 10.21& 10.07 & 1.50& 1.48\\
\midrule
Tucker shared K,V [$8,128,128$]& \twocol{13.40} & \twocol{0.750} &3.888&3.822 & 52.10 & 51.21  & 2.92& 2.87\\
Tucker shared K,V [$8,64,64$]& \twocol{5.14} & \twocol{0.375} & 3.951& 3.881 & 20.31& 19.95 & 1.48& 1.46\\
Tucker shared K,V [$6,64,64$]& \twocol{4.74} & \twocol{0.375} & 3.955&3.891 & 18.75& 18.44 & 1.48& 1.46\\
Tucker shared K,V [$6,32,32$]& \twocol{2.08}  & \twocol{0.375} &  4.058&3.974 & 8.44& 8.27  & 1.52& 1.49\\
Tucker shared K,V [$4,32,32$]&  \twocol{1.98} & \twocol{0.187} & 4.070& 4.004 & 8.06& 7.93 & 0.76& 0.75\\
\bottomrule
\end{tabular}
}
\end{sidewaystable}

\section{Additional Figures}

\begin{figure}[t!]
    \centering
    \begin{subfigure}{0.49\textwidth}
        \includegraphics[width=\textwidth]{assets/img/tucker_ranks/Head_Pre.pdf}
        \caption{Head Mode: $\Mat_1(\mathcal{W})$}
        \label{fig:gpt_tucker_ranks:pre_head}
    \end{subfigure}
    \begin{subfigure}{0.49\textwidth}
        \includegraphics[width=\textwidth]{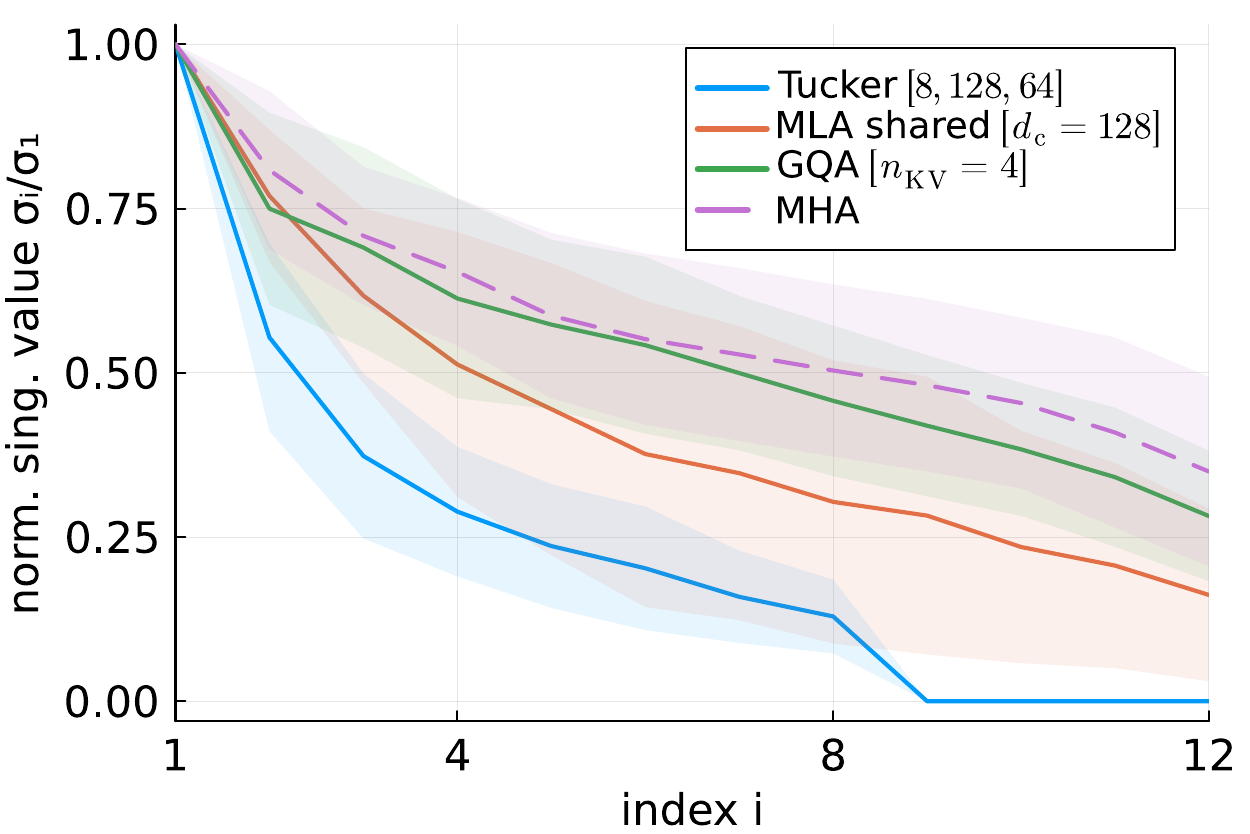}
        \caption{Head Mode: $\Mat_1(\widetilde{\mathcal{W}})$}
        \label{fig:gpt_tucker_ranks:post_head}
    \end{subfigure}

    \begin{subfigure}{0.49\textwidth}
        \includegraphics[width=\textwidth]{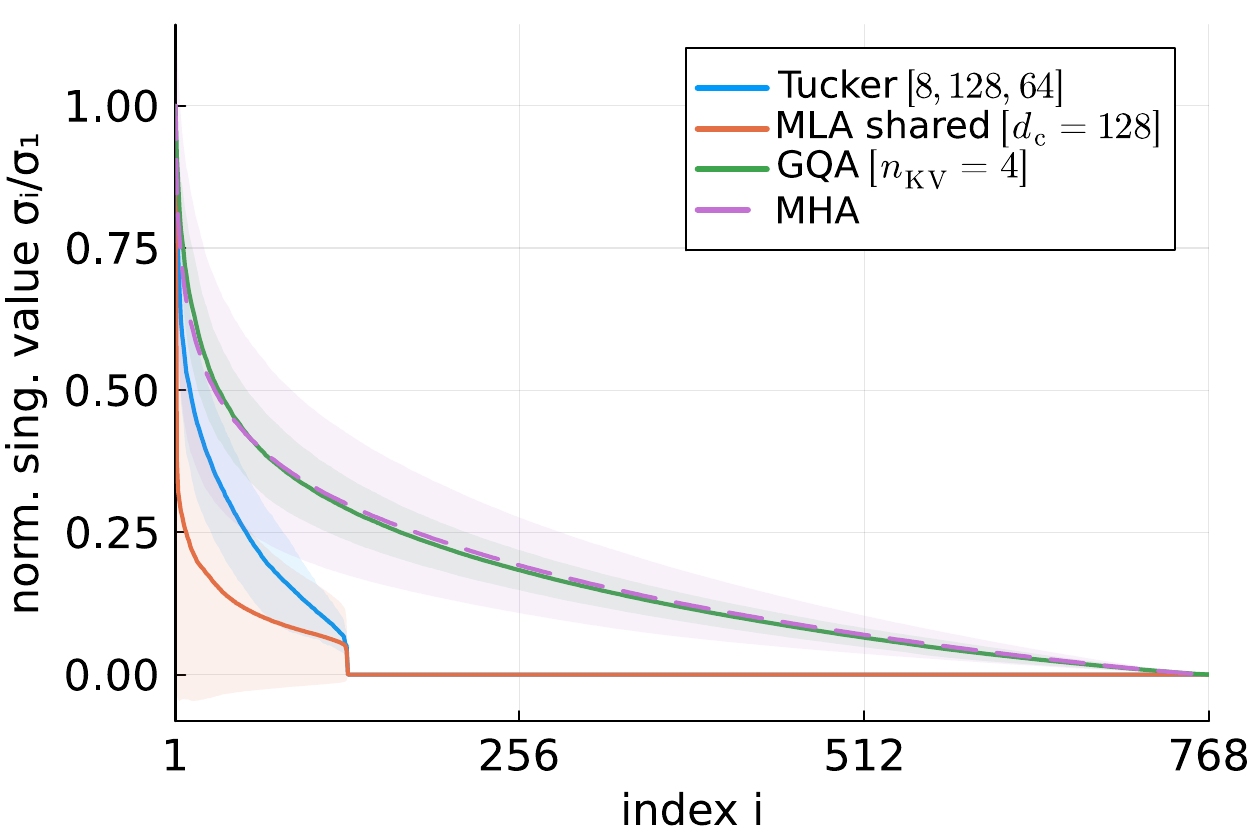}
        \caption{Query Mode: $\Mat_2(\mathcal{W})$}
        \label{fig:gpt_tucker_ranks:query}
    \end{subfigure}
    \begin{subfigure}{0.49\textwidth}
        \includegraphics[width=\textwidth]{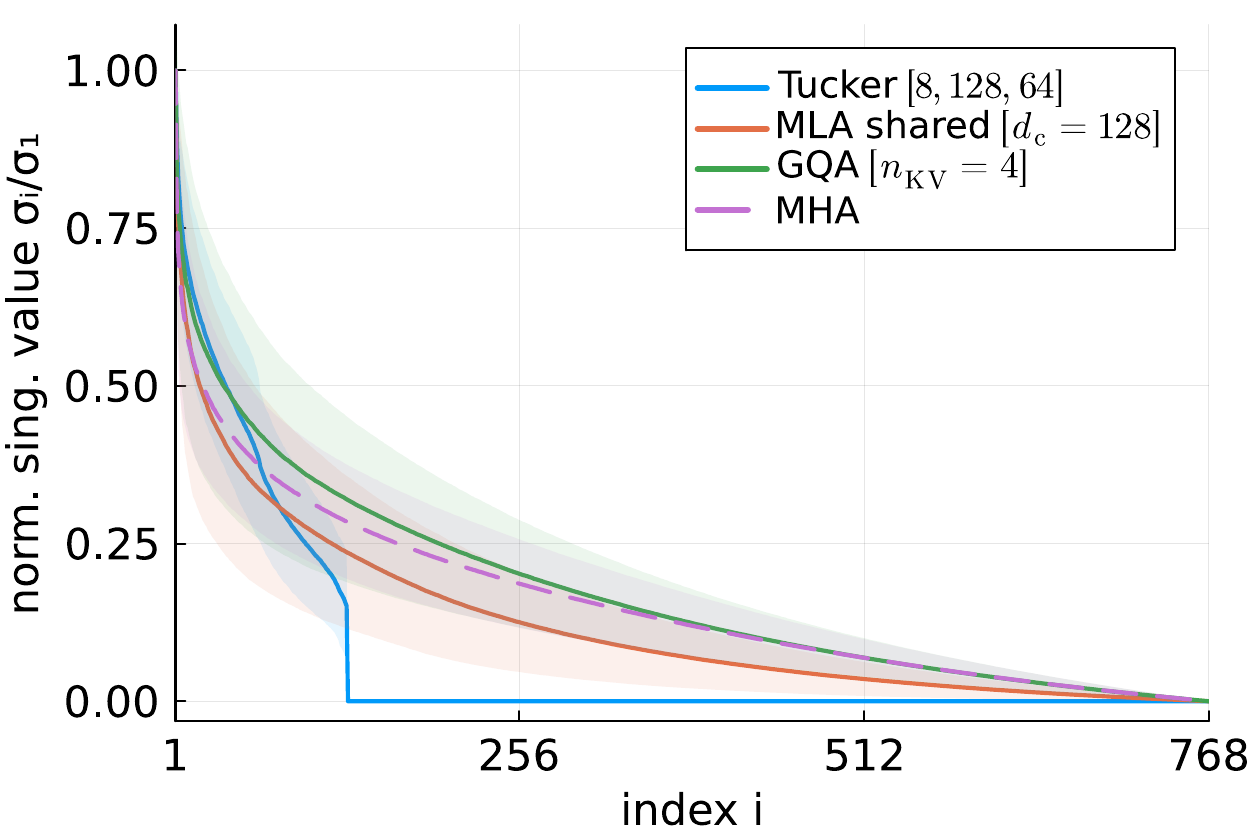}
        \caption{Output Mode: $\Mat_2(\widetilde{\mathcal{W}})$}
        \label{fig:gpt_tucker_ranks:output}
    \end{subfigure}

    \begin{subfigure}{0.49\textwidth}
        \includegraphics[width=\textwidth]{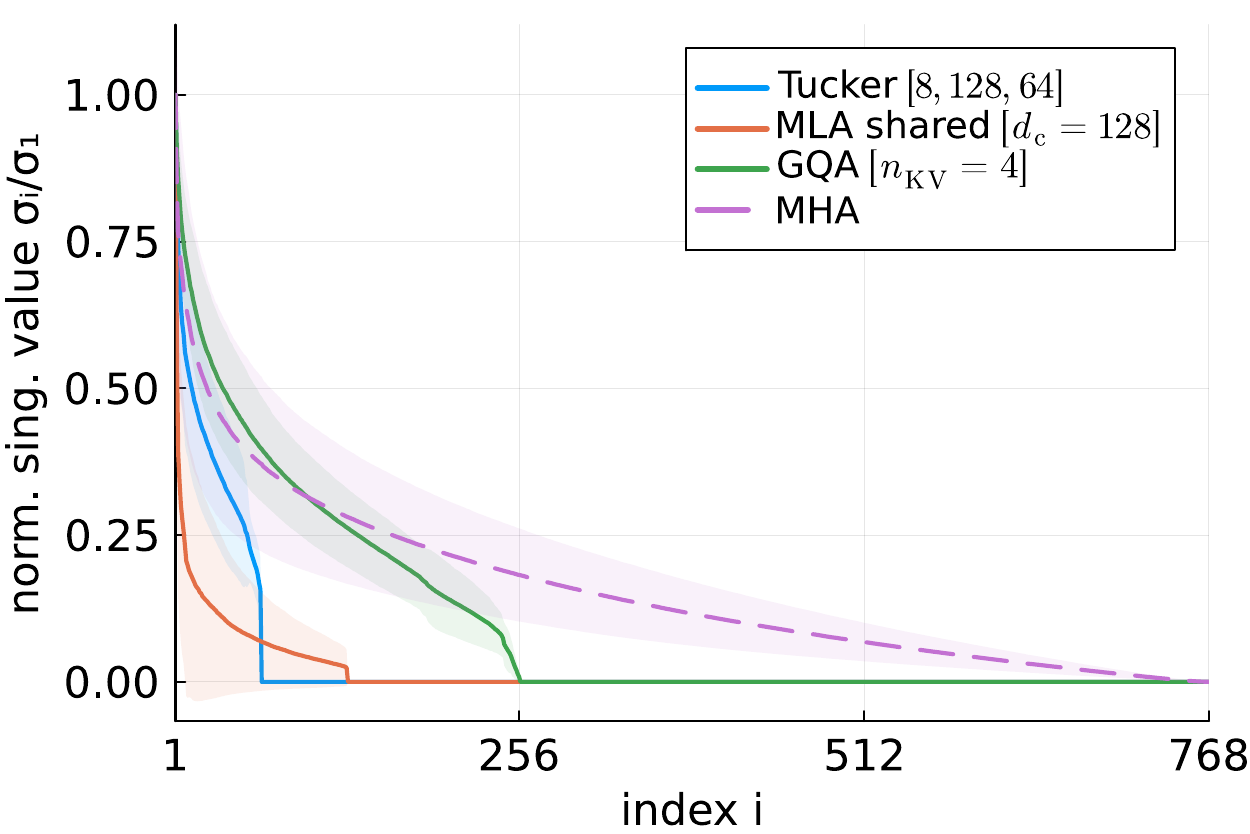}
        \caption{Key Mode: $\Mat_3(\mathcal{W})$}
        \label{fig:gpt_tucker_ranks:key}
    \end{subfigure}
    \begin{subfigure}{0.49\textwidth}
        \includegraphics[width=\textwidth]{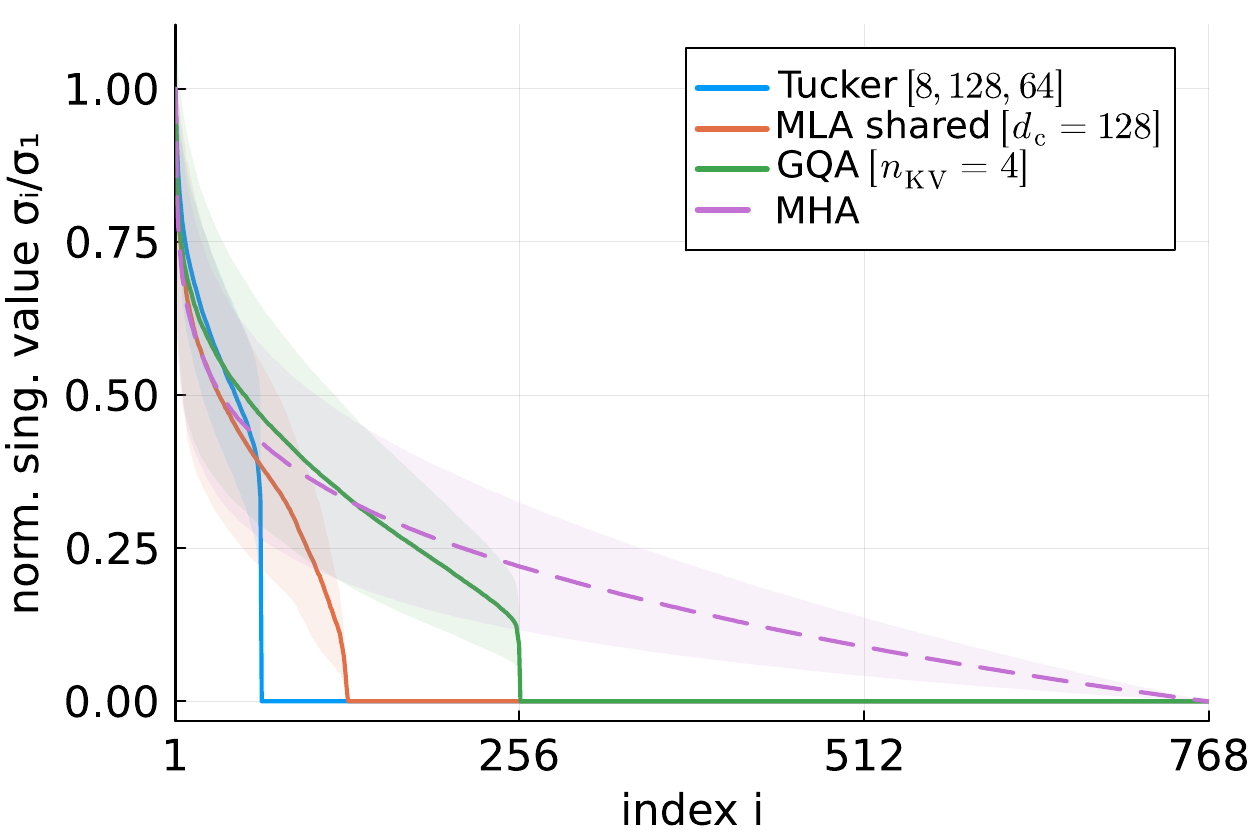}
        \caption{Value Mode: $\Mat_3(\widetilde{\mathcal{W}})$}
        \label{fig:gpt_tucker_ranks:value}
    \end{subfigure}
    \caption{Normalized singular spectrum of the transformer layers in GPT2 after training.  The singular spectrum was computed via a matricization of the tensor along a single mode; specific information about which mode is given in each subplot.  The ribbon plots are calculated as a sample distribution over all twelve transformer layers in GPT2.}
    \label{fig:gpt_tucker_ranks}
\end{figure}

\begin{figure}
    \centering
        \begin{subfigure}{0.49\textwidth}
        \centering
        \includegraphics[width=\textwidth]{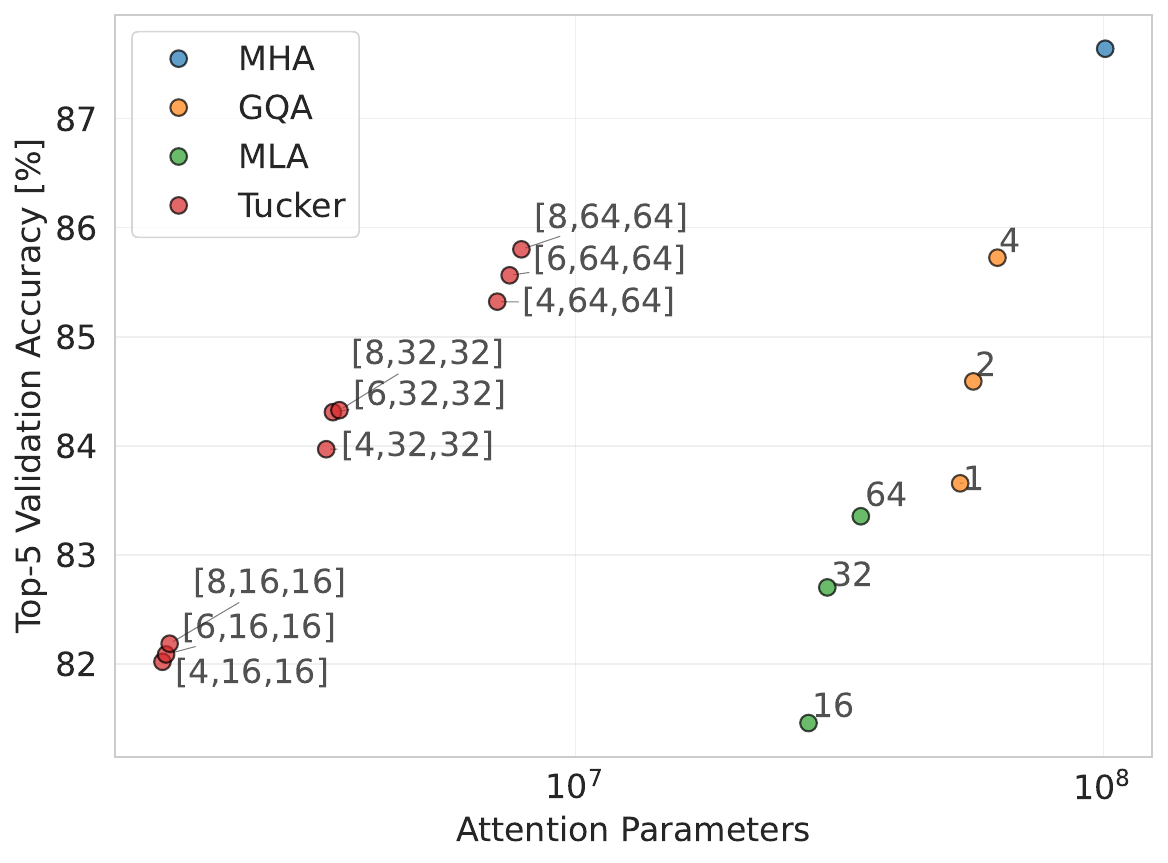}
        \caption{Top 5 validation accuracy [\%]}
        %\label{fig:sub2}
    \end{subfigure}
    \hfill
    \begin{subfigure}{0.49\textwidth}
        \centering
        \includegraphics[width=\textwidth]{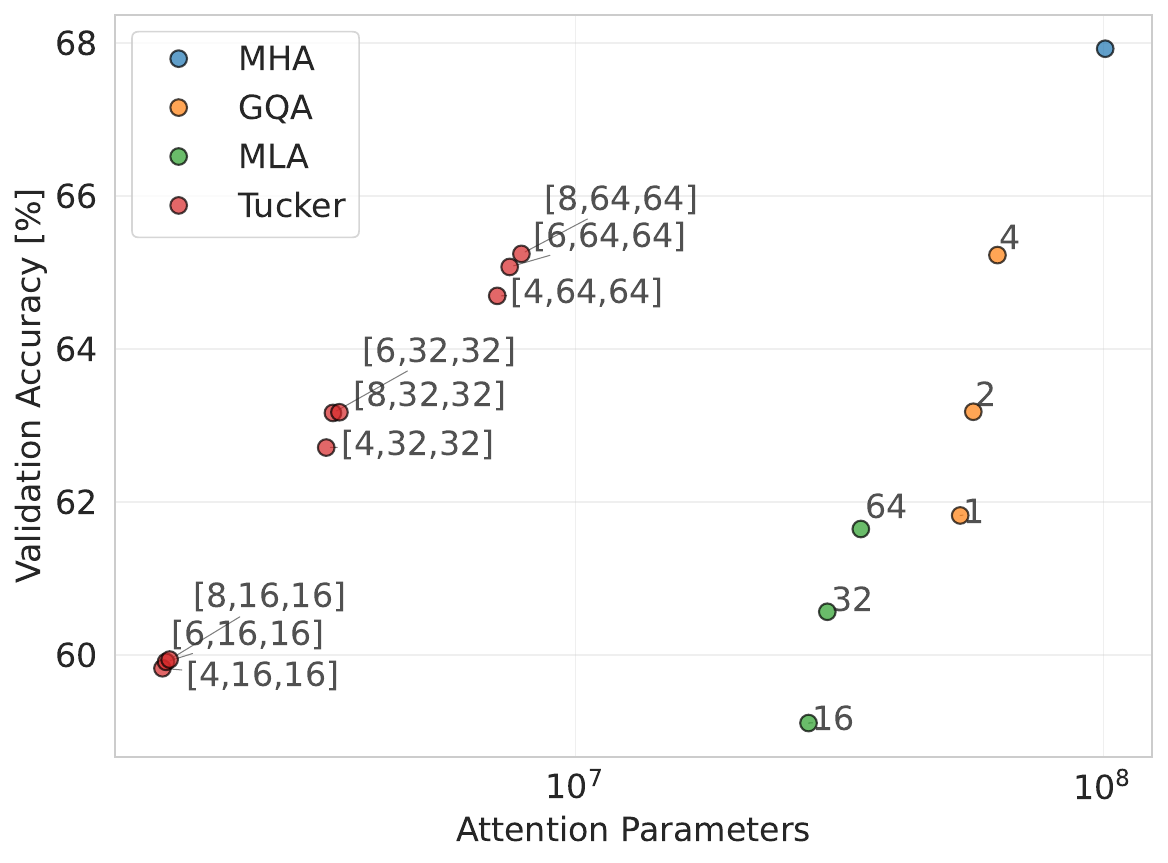}
          \caption{Top 1 validation accuracy [\%]}
        %\label{fig:sub1}
    \end{subfigure}
 
    \caption{Vit-L 32 on Imagenet1k. Results denoting mean over 5 runs. The attention layer of ViT-L.32 has dimensions $\dmod=1024, \nH=16,\dH=64$.}
    %\label{fig:main}
\end{figure}

\begin{figure}
    \centering
        \begin{subfigure}{0.49\textwidth}
        \centering
        \includegraphics[width=\textwidth]{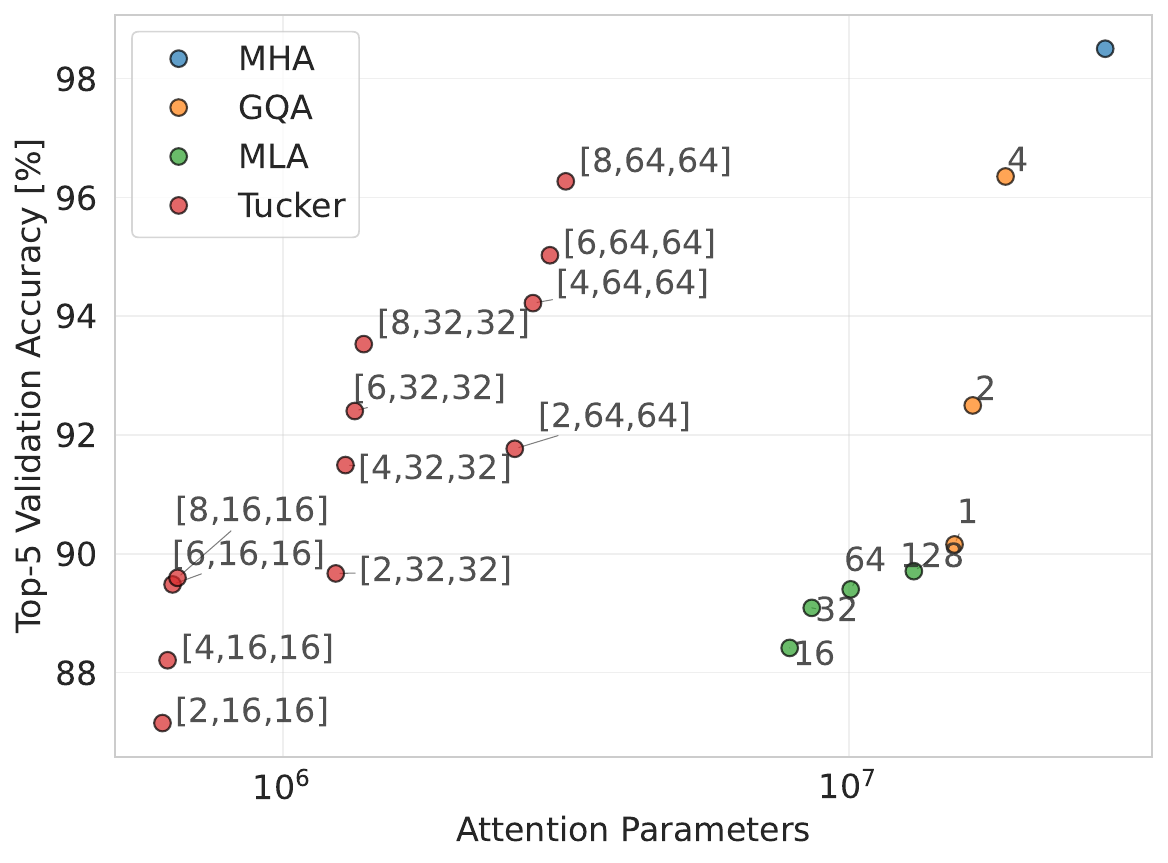}
        \caption{Top 5 validation accuracy [\%]}
        %\label{fig:sub2}
    \end{subfigure}
    \hfill
    \begin{subfigure}{0.49\textwidth}
        \centering
        \includegraphics[width=\textwidth]{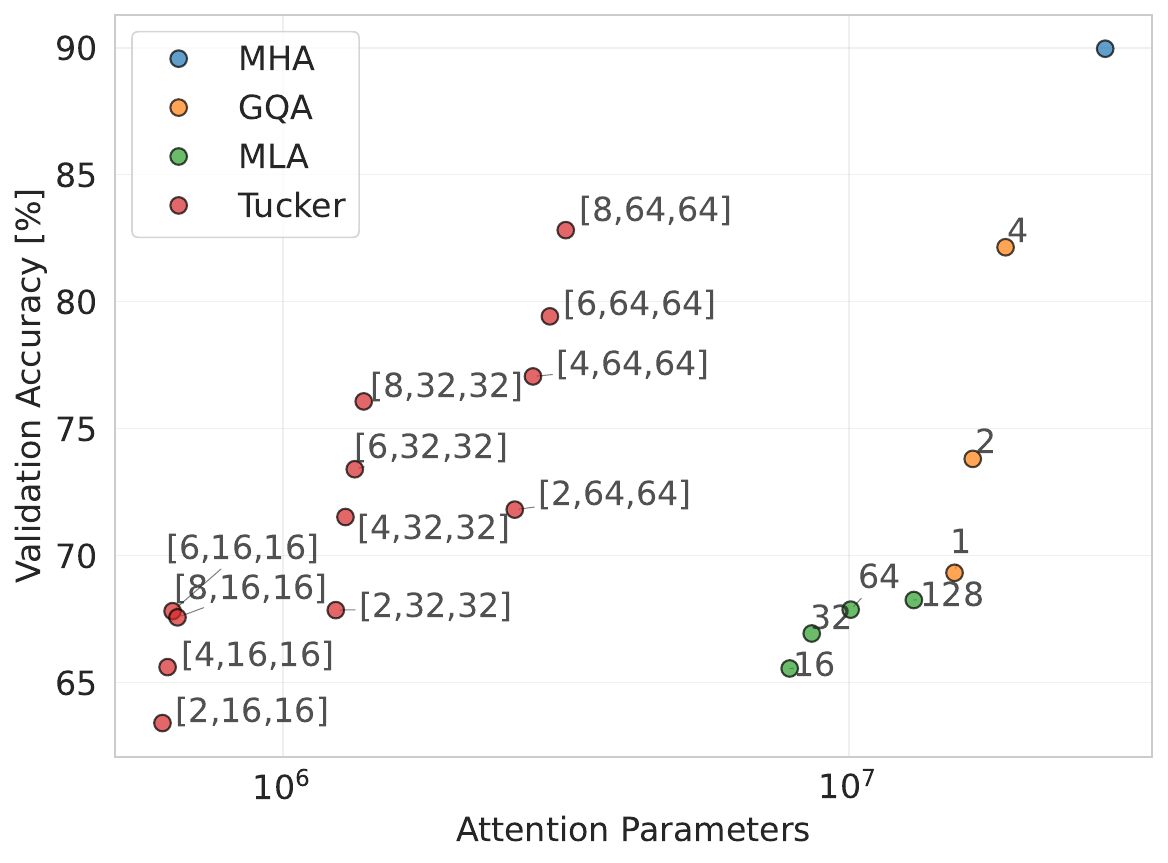}
          \caption{Top 1 validation accuracy [\%]}
        %\label{fig:sub1}
    \end{subfigure}
 
    \caption{Vit-B 16 on Cifar100. Results denoting mean over 5 runs. The attention layer of Vit-B 16 has dimensions $\dmod=768, \nH=12,\dH=64$. }
    %\label{fig:main}
\end{figure}

\begin{figure}
    \centering
        \begin{subfigure}{0.49\textwidth}
        \centering
        \includegraphics[width=\textwidth]{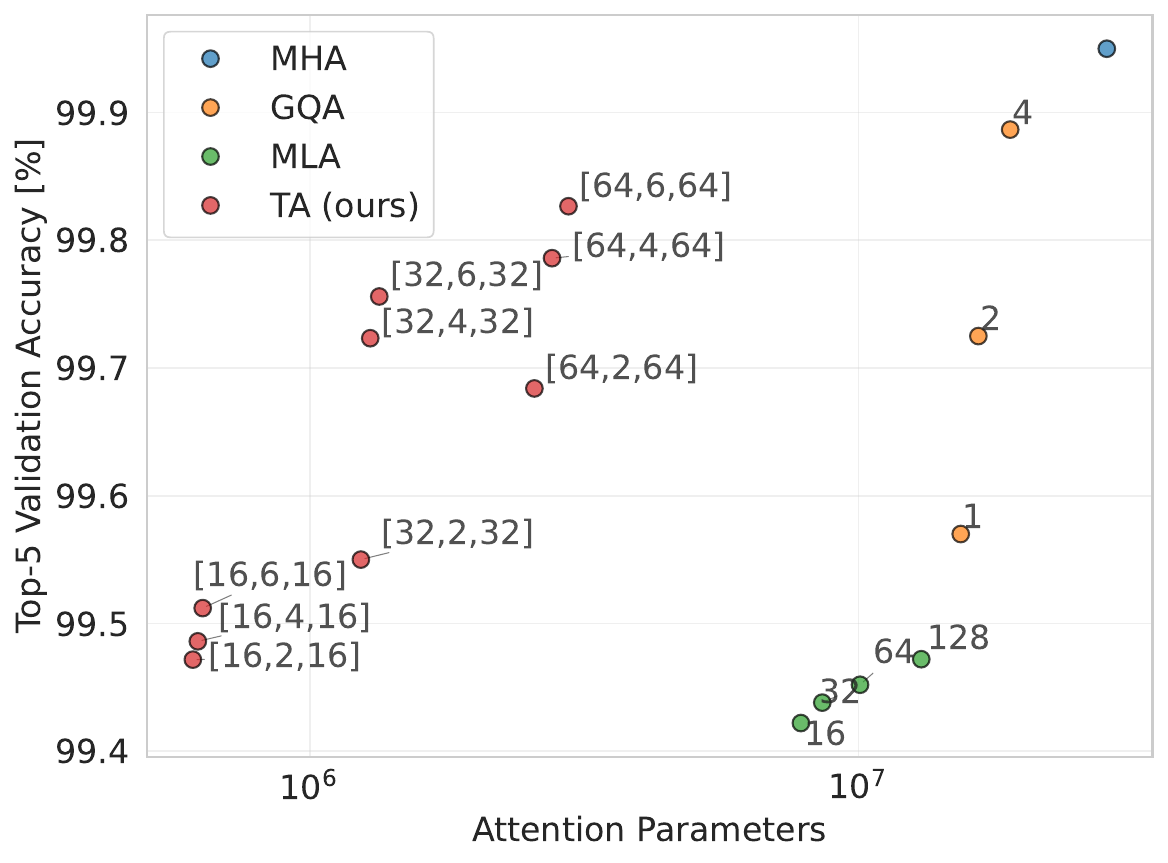}
        \caption{Top 5 validation accuracy [\%]}
        \label{fig:sub2}
    \end{subfigure}
    \hfill
    \begin{subfigure}{0.49\textwidth}
        \centering
        \includegraphics[width=\textwidth]{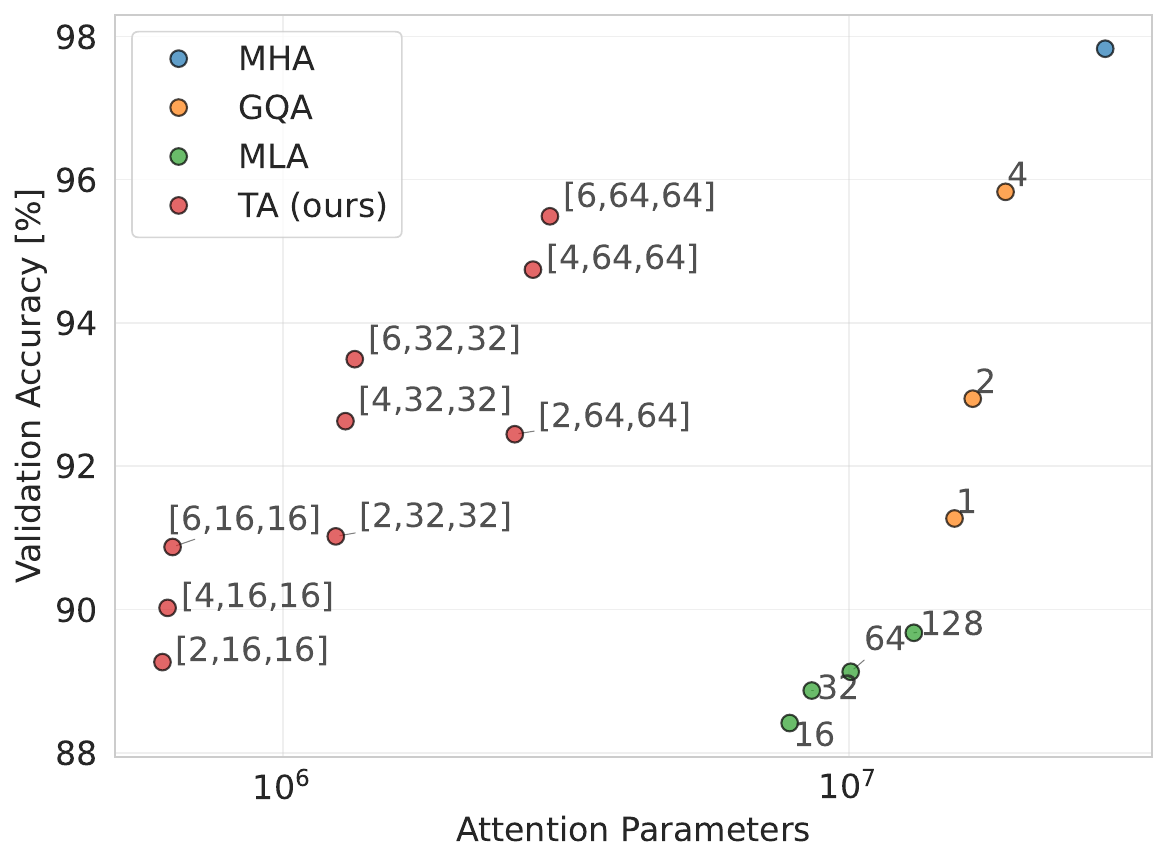}
          \caption{Top 1 validation accuracy [\%]}
        %\label{fig:sub1}
    \end{subfigure}
 
    \caption{Vit-B 16 on Cifar10. Results denoting mean over 5 runs. The attention layer of Vit-B 16 has dimensions $\dmod=768, \nH=12,\dH=64$. }
    %\label{fig:main}
\end{figure}

\begin{figure}[t]
    \centering
    \begin{subfigure}{0.49\textwidth}
        \centering
        \includegraphics[width=\textwidth]{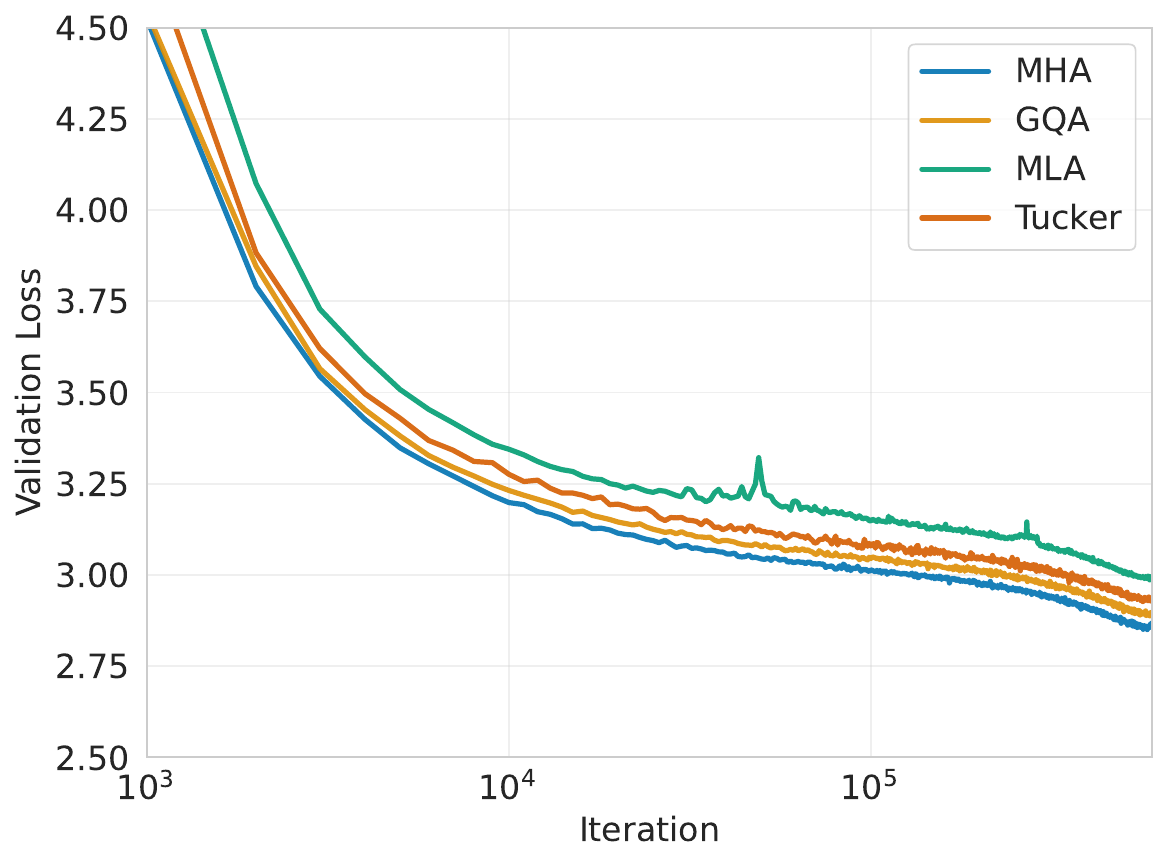}
    \end{subfigure}
    \hfill

    \caption{GPT2 validation cross-entropy loss on OpenWebtext without RoPE. Approximate attention methods are MLA with $\dcQ=\dcK=128$, GQA with $\nG=2$, \tuckerattn~ with $r=[8,64,128]$. Parameter counts are (in millions) MHA: 28.31, GQA: 16.51, MLA: 13.00, Tucker: 5.32. \tuckerattn~ converges well, despite having only a fraction of the trainable parameters of the other methods.}
    \label{fig_gpt2}
    \vspace{-0.75cm}
\end{figure}

\end{document}